\def\eqref#1{equation~(\ref{#1})}
\def\1{\bf{1}}
\newcommand{\norm}[1]{\left\| #1 \right\|_2}
\def\inner#1#2{\langle #1, #2 \rangle}
\def\eps{{\varepsilon}}
\def\va{{\bf{a}}}
\def\vb{{\bf{b}}}
\def\vc{{\bf{c}}}
\def\vu{{\bf{u}}}
\def\vv{{\bf{v}}}
\def\vw{{\bf{w}}}
\def\vx{{\bf{x}}}
\def\vy{{\bf{y}}}
\def\vz{{\bf{z}}}
\def\fB{{\mathcal{B}}}
\def\fC{{\mathcal{C}}}
\def\fG{{\mathcal{G}}}
\def\fO{{\mathcal{O}}}
\def\fS{{\mathcal{S}}}
\def\fV{{\mathcal{V}}}
\def\fY{{\mathcal{Y}}}
\def\BE{{\mathbb{E}}}
\def\BR{{\mathbb{R}}}
\def\mI {{\bf I}}
\newcommand{\R}{\mathbb{R}}
\DeclareMathOperator*{\argmax}{arg\,max}
\DeclareMathOperator*{\argmin}{arg\,min}
\theoremstyle{plain}
\newtheorem{thm}{Theorem}
\newtheorem{dfn}{Definition}
\newtheorem{lem}{Lemma}
\newtheorem{asm}{Assumption}
\newtheorem{cor}{Corollary}
\def\Ddots{\mathinner{\mkern1mu\raise\p@
\vbox{\kern7\p@\hbox{.}}\mkern2mu
\raise4\p@\hbox{.}\mkern2mu\raise7\p@\hbox{.}\mkern1mu}}
\newcommand*{\rom}[1]{\expandafter\@slowromancap\romannumeral #1@}
\def\hu{{\hat{\bf{u}}}}
\def\tw{{\tilde{\bf{w}}}}
\def\tx{{\tilde{\bf{x}}}}
\def\ty{{\tilde{\bf{y}}}}
\def\tv{{\tilde{\bf{v}}}}
\def\tu{{\tilde{\bf{u}}}}
\def\vxi{{\bm{\xi}}}
\def\tDelta{{\widetilde{\Delta}}}
\title{Stochastic Recursive Gradient Descent Ascent for Stochastic Nonconvex-Strongly-Concave Minimax Problems}
\author{
    Luo Luo$^1$ \qquad Haishan Ye$^2$ \qquad Zhichao Huang$^1$ \qquad Tong Zhang$^1$ \\ 
    $^1$Department of Mathematics, The Hong Kong University of Science and Technology \\
    $^2$Shenzhen Research Institute of Big Data, The Chinese University of Hong Kong, Shenzhen  \\
    {\small \texttt{luoluo@ust.hk}\hskip1.8em
    \texttt{hsye\_cs@outlook.com}\hskip1.8em
    \texttt{zhuangbx@connect.ust.hk}\hskip1.8em
    \texttt{tongzhang@ust.hk}}
}
\begin{document}

\maketitle

\begin{abstract}
We consider nonconvex-concave minimax optimization problems of the form $\min_{\bf x}\max_{\bf y\in{\mathcal Y}} f({\bf x},{\bf y})$, where $f$ is strongly-concave in $\bf y$ but possibly nonconvex in $\bf x$ and ${\mathcal Y}$ is a convex and compact set. We focus on the stochastic setting, where we can only access an unbiased stochastic gradient estimate of $f$ at each iteration. This formulation includes many machine learning applications as special cases such as robust optimization and adversary training. We are interested in finding an ${\mathcal O}(\varepsilon)$-stationary point of the function $\Phi(\cdot)=\max_{\bf y\in{\mathcal Y}} f(\cdot, {\bf y})$. The most popular algorithm to solve this problem is stochastic gradient decent ascent, which requires $\mathcal O(\kappa^3\varepsilon^{-4})$ stochastic gradient evaluations, where $\kappa$ is the condition number. In this paper, we propose a novel method called Stochastic Recursive gradiEnt Descent Ascent (SREDA), which estimates gradients more efficiently using variance reduction. This method achieves the best known stochastic gradient complexity of ${\mathcal O}(\kappa^3\varepsilon^{-3})$, and its dependency on $\varepsilon$ is optimal for this problem. 
\end{abstract}

\section{Introduction}

This paper considers the following minimax optimization problem
\begin{align}
    \min_{\vx\in\BR^{d}}\max_{\vy\in\fY} f(\vx,\vy) \triangleq {\mathbb E}\left[F({\bf x},{\bf y}; \vxi)\right], \label{prob:main-f}
\end{align}
where the stochastic component $F(\vx, \vy; \vxi)$,
indexed by some random vector $\vxi$,  is $\ell$-gradient Lipschitz on average.
This minimax optimization formulation includes many machine learning applications such as regularized empirical risk minimization~\cite{zhang2017stochastic,tan2018stochastic}, AUC maximization~\cite{ying2016stochastic,shen2018towards}, robust optimization~\cite{duchi2019variance,yan2019stochastic}, adversarial training~\cite{goodfellow2014generative,goodfellow2014explaining,sinha2017certifying} and reinforcement learning \cite{wai2018multi,du2017stochastic}.
Many existing work~\cite{chambolle2011first,ying2016stochastic,palaniappan2016stochastic,zhang2017stochastic,tan2018stochastic,chavdarova2019reducing,
ouyang2018lower,zhang2019lower,du2017stochastic,du2018linear,luo2019stochastic,hsieh2019convergence,xie2020lower}
focused on the convex-concave case of problem (\ref{prob:main-f}),
where $f$ is convex in $\vx$ and concave in $\vy$. For such problems,
one can establish strong theoretical guarantees.

In this paper, we focus on a more general case of (\ref{prob:main-f}),
where $f(\vx,\vy)$ is $\mu$-strongly-concave in $\vy$ but possibly
nonconvex in $\vx$.  This case is referred to as stochastic
nonconvex-strongly-concave minimax problems, and it is equivalent to the following problem
\begin{align}
    \min_{\vx\in\R^{d}}\left\{\Phi(\vx) \triangleq \max_{\vy\in\fY} f(\vx,\vy)\right\}. \label{prob:main-phi}
\end{align}
Formulation (\ref{prob:main-phi}) contains several interesting examples in machine 
 learning such as
robust optimization~\cite{duchi2019variance,yan2019stochastic} and
adversarial training~\cite{goodfellow2014explaining,sinha2017certifying}.

Since $\Phi$ is possibly nonconvex, it is infeasible to find the global minimum in general.
One important task of the minimax problem is finding an approximate stationary point of $\Phi$.
A simple way to solve this problem is stochastic gradient descent with max-oracle (SGDmax) \cite{jin2019local,lin2019gradient}.
The algorithm includes a nested loop to solve $\max_{\vy\in\fY} f(\vx,\vy)$ and use the solution to run approximate stochastic gradient descent (SGD) on $\vx$. \citet{lin2019gradient} showed that we can solve problem (\ref{prob:main-phi}) by directly extending SGD to stochastic gradient descent ascent (SGDA).
The iteration of SGDA is just using gradient descent on $\vx$ and
gradient descent on $\vy$. The complexity of SGDA to
find $\fO(\eps)$-stationary point of $\Phi$ in expectation is
$\fO\left(\kappa^3\eps^{-4}\right)$ stochastic gradient evaluations, where $\kappa\triangleq\ell/\mu$ is the condition number. SGDA is more efficient than SGDmax whose complexity is $\fO\left((\kappa^3\eps^{-4})\log(1/\eps)\right)$.

One insight of SGDA is that the algorithm selects an appropriate ratio
of learning rates for $\vx$ and $\vy$. Concretely, the learning rate
for updating $\vy$ is $\fO(\kappa^2)$ times that of $\vx$.  Using this
idea, it can be shown that the nested loop of SGDmax is unnecessary,
and SGDA eliminates the logarithmic term in the complexity result. 
In addition, \citet{rafique2018non}
presented some nested-loop algorithms that also achieved $\fO\left(\kappa^3\eps^{-4}\right)$ complexity. Recently, \citet{yan2020sharp} proposed Epoch-GDA which considered constraints on both two variables.

\citet{lin2020near} proposed a deterministic algorithm called minimax proximal point algorithm (Minimax PPA) to solve nonconvex-strongly-concave minimax problem whose complexity has square root dependence on $\kappa$. \citet{thekumparampil2019efficient,barazandeh2020solving,ostrovskii2020efficient} also studied the non-convex-concave minimax problems, however, these methods do not cover the stochastic setting in this paper and only work for a special case of problem (\ref{prob:main-phi}) when the stochastic variable $\vxi$ is finitely sampled from $\{\vxi_1,\dots,\vxi_n\}$ (a.k.a. finite-sum case). That is, 
\begin{align}
    f(\vx,\vy)\triangleq \frac{1}{n}\sum_{i=1}^n F(\vx,\vy;\vxi_i). \label{prob:main-finite}
\end{align}

In this paper, we propose a novel algorithm called Stochastic Recursive gradiEnt Descent Ascent (SREDA) for stochastic nonconvex-strongly-concave minimax problems.
Unlike SGDmax and SGDA, which only iterate with current stochastic gradients, our SREDA updates the estimator recursively and reduces its variance. 

The variance reduction techniques have been widely used in convex and nonconvex minimization 
problems~\cite{zhang2013linear,johnson2013accelerating,nguyen2017sarah,schmidt2017minimizing,defazio2014saga,allen2017katyusha,allen2018katyushax,nguyen2018inexact,reddi2016stochastic,allen2017natasha,lei2017non,allen2016variance,fang2018spider,pham2019proxsarah,li2018simple,zhou2018stochastic,wang2019spiderboost} and convex-concave saddle point problems~\cite{palaniappan2016stochastic,du2017stochastic,du2018linear,luo2019stochastic,chavdarova2019reducing}.
However, the nonconvex-strongly-concave minimax problems have two variables $\vx$ and $\vy$ and their roles in the objective function are quite different. To apply the  technique of variance reduction, SREDA employs a concave maximizer with multi-step iteration on $\vy$ to simultaneously balance the learning rates, gradient batch sizes and iteration numbers of the two variables. We prove SREDA reduces the number of stochastic gradient evaluations to $\fO(\kappa^3\eps^{-3})$, which is the best known upper bound complexity. The result gives optimal dependency on $\eps$ since the lower bound of stochastic first order algorithms for general nonconvex optimization is $\fO(\eps^{-3})$~\cite{ACDFSW2019}.
For finite-sum cases, the gradient cost of SREDA is $\fO\left(n\log(\kappa/\eps) + \kappa^2n^{1/2}\eps^{-2}\right)$ when $n\geq \kappa^2$, and $\fO\left((\kappa^2 + \kappa n)\eps^{-2} \right)$ when $n\leq \kappa^2$. This result is sharper than Minimax PPA \cite{lin2020near} in the case of $n$ is larger than  $\kappa^2$. We summarize the comparison of all algorithms in Table \ref{table:complexity}.

\begin{table*}
\centering
\caption{We present the comparison on stochastic gradient complexities of  algorithms to solve stochastic problem (\ref{prob:main-phi}) and finite-sum problem (\ref{prob:main-finite}). We use notation $\tilde\fO(\cdot)$ to hide logarithmic factors. Some baseline algorithms solve problem (\ref{prob:main-finite}) without considering the finite-sum structure and we regard the cost of full gradient evaluation is $\fO(n)$.}\label{table:complexity}
\newcommand{\tabincell}[2]{\begin{tabular}{@{}#1@{}}#2\end{tabular}}
\renewcommand{\arraystretch}{1.5}
\begin{tabular}{|c|c|c|c|}
\hline
Algorithm & Stochastic & Finite-sum & Reference\\ \hline
SGDmax (GDmax)       & $\tilde\fO(\kappa^3\eps^{-4})$ & $\tilde\fO(\kappa^2n\eps^{-2})$ & \cite{jin2019local,lin2019gradient} \\\hline
PGSMD / PGSVRG & $\fO(\kappa^3\eps^{-4})$ & $\fO(\kappa^2n\eps^{-2})$ & \cite{rafique2018non}\\  \hline
MGDA / HiBSA   & -- & $\fO(\kappa^4n\eps^{-2})$  & \cite{nouiehed2019solving,lu2019hybrid} \\ \hline
Minimax PPA & -- & $\tilde\fO(\kappa^{1/2}n\eps^{-2})$ & \cite{lin2020near} \\ \hline
SGDA (GDA)   & $\fO(\kappa^3\eps^{-4})$ & $\fO(\kappa^2n\eps^{-2})$ & \cite{lin2019gradient} \\ \hline
SREDA   & $\fO(\kappa^3\eps^{-3})$ &
\tabincell{c}{ \\[-0.5cm]
$\begin{cases}
{\tilde\fO\left(n + \kappa^2n^{1/2}\eps^{-2}\right)}, & n\geq \kappa^2 \\[0.05cm]
\fO\left((\kappa^2 + \kappa n)\eps^{-2} \right), & n\leq \kappa^2
\end{cases}$ \\[0.25cm]}
& this paper \\ \hline
\end{tabular}
\renewcommand{\arraystretch}{1}
\end{table*}

The paper is organized as follows.
In Section \ref{section:preliminaries}, we present notations and preliminaries.
In Section \ref{section:related}, we review the existing work for stochastic nonconvex-strongly-concave optimization and related techniques.
In Section \ref{section:SREDA}, we present the SREDA algorithm and the main theoretical result.
In Section \ref{section:proofs}, we give a brief overview of our convergence analysis. 
In Section \ref{section:experiment}, we demonstrate the effectiveness of our methods on robust optimization problem. We conclude this work in Section \ref{section:conclusions}.

\section{Notation and Preliminaries}\label{section:preliminaries}

We first introduce the notations and preliminaries used in this paper.
For a differentiable function $f(\vx,\vy)$,
we denote the partial gradient of $f$ with respect to $\vx$ and $\vy$ at $(\vx,\vy)$ as $\nabla_\vx f(\vx,\vy)$ and $\nabla_\vy f(\vx,\vy)$ respectively. We use $\norm{\cdot}$ to denote the Euclidean norm of vectors. For a finite set $\fS$, we denote its cardinality as $|\fS|$.
We assume that the minimax problem (\ref{prob:main-phi}) satisfies the following assumptions.
\begin{asm}\label{asm:bound}
    The function $\Phi(\cdot)$ is lower bounded, i.e., we have $\Phi^*=\inf_{\vx\in\BR^{d}} \Phi(\vx) > -\infty$.
\end{asm}

\begin{asm}\label{asm:F}
    The component function $F$ has an average $\ell$-Lipschitz gradient,  i.e., there exists a constant $\ell>0$ such that 
    $\BE\norm{\nabla F(\vx,\vy; \vxi) - \nabla F(\vx',\vy'; \vxi)}^2 \leq \ell^2(\norm{\vx-\vx'}^2 + \norm{\vy-\vy'}^2)$
    for any $(\vx,\vy)$, $(\vx',\vy')$ and random vector $\vxi$
\end{asm}

\begin{asm}\label{asm:F2}
    The component function $F$ is concave in $\vy$. That is, for any $\vx$, $\vy$, $\vy'$  and random vector $\vxi$, we have
    $F(\vx,\vy; \vxi) \leq F(\vx,\vy'; \vxi) + \inner{\nabla_\vy F(\vx,\vy';\vxi)}{\vy-\vy'}$.
\end{asm}

\begin{asm}\label{asm:f}
    The function $f(\vx,\vy)$ is $\mu$-strongly-concave in $\vy$. That is,
    there exists a constant $\mu>0$ such that for any $\vx$, $\vy$ and $\vy'$, we have
    $f(\vx,\vy) \leq f(\vx,\vy') + \inner{\nabla_\vy f(\vx,\vy')}{\vy-\vy'} - \frac{\mu}{2}\norm{\vy-\vy'}^2$.
\end{asm}

\begin{asm}\label{asm:grad}
    The gradient of each component function $F(\vx,\vy;\vxi)$ has bounded variance. That is, there exists a constant $\sigma>0$ such that $\BE\norm{\nabla F(\vx,\vy;\vxi) - \nabla f(\vx,\vy)}^2 \leq \sigma^2 < \infty$
    for any $\vx$, $\vy$ and random vector $\vxi$.
\end{asm}
Under the assumptions of Lipschitz-gradient and strongly-concavity on $f$, we can show that $\Phi(\cdot)$ also has Lipschitz-gradient.
\begin{lem}[{\cite[Lemma 4.3]{lin2019gradient}}]\label{lem:Phi-smooth}
    Under Assumptions \ref{asm:F} and \ref{asm:f}, the function $\Phi(\cdot)=\max_{\vy\in\fY} f(\cdot,\vy)$ has $(\ell+\kappa\ell)$-Lipschitz gradient.
    Additionally, the function $\vy^*(\cdot)=\argmax_\vy f(\cdot,\vy)$ is unique defined and we have $\nabla\Phi(\cdot)=\nabla_\vx f(\cdot,\vy^*(\cdot))$.
\end{lem}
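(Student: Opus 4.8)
The plan is to first establish that the inner maximizer is well defined, then apply Danskin's theorem to obtain an explicit formula for $\nabla\Phi$, and finally control the variation of $\vy^*(\cdot)$ so as to transfer smoothness from $f$ to $\Phi$. For the uniqueness claim, fix any $\vx$ and observe that Assumption \ref{asm:f} makes $f(\vx,\cdot)$ $\mu$-strongly-concave; a strongly concave function attains its maximum at exactly one point, so $\vy^*(\vx)=\argmax_\vy f(\vx,\vy)$ is single-valued. I would also record at the outset that, since $F$ has an average $\ell$-Lipschitz gradient (Assumption \ref{asm:F}), Jensen's inequality implies $f$ itself has an $\ell$-Lipschitz gradient, a fact used repeatedly below.

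Next I would invoke Danskin's theorem. Because $\vy^*(\vx)$ is the unique maximizer and $f$ is continuously differentiable, Danskin's theorem gives $\nabla\Phi(\vx)=\nabla_\vx f(\vx,\vy^*(\vx))$. Hence bounding $\norm{\nabla\Phi(\vx_1)-\nabla\Phi(\vx_2)}$ reduces to comparing the partial $\vx$-gradients of $f$ at $(\vx_1,\vy^*(\vx_1))$ and $(\vx_2,\vy^*(\vx_2))$, and the only non-routine ingredient is a Lipschitz bound on the map $\vx\mapsto\vy^*(\vx)$.

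The crux — and the step I expect to be the main obstacle — is showing $\vy^*$ is $\kappa$-Lipschitz. I would start from the first-order optimality conditions $\nabla_\vy f(\vx_1,\vy^*(\vx_1))=\vzero$ and $\nabla_\vy f(\vx_2,\vy^*(\vx_2))=\vzero$. Using the $\mu$-strong concavity of $f(\vx_1,\cdot)$ together with the vanishing gradient at $\vy^*(\vx_1)$, I obtain $\mu\norm{\vy^*(\vx_1)-\vy^*(\vx_2)}^2 \leq \inner{\nabla_\vy f(\vx_1,\vy^*(\vx_2))}{\vy^*(\vx_1)-\vy^*(\vx_2)}$. Then, rewriting $\nabla_\vy f(\vx_1,\vy^*(\vx_2)) = \nabla_\vy f(\vx_1,\vy^*(\vx_2)) - \nabla_\vy f(\vx_2,\vy^*(\vx_2))$, applying the $\ell$-Lipschitz gradient bound, and invoking Cauchy--Schwarz yields $\mu\norm{\vy^*(\vx_1)-\vy^*(\vx_2)}^2 \leq \ell\norm{\vx_1-\vx_2}\,\norm{\vy^*(\vx_1)-\vy^*(\vx_2)}$; dividing through gives $\norm{\vy^*(\vx_1)-\vy^*(\vx_2)}\leq \kappa\norm{\vx_1-\vx_2}$.

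Finally, I would assemble the pieces via the Danskin formula, a triangle-inequality split, and the $\ell$-Lipschitz gradient of $f$ in each argument:
\begin{align*}
\norm{\nabla\Phi(\vx_1)-\nabla\Phi(\vx_2)}
&\leq \norm{\nabla_\vx f(\vx_1,\vy^*(\vx_1)) - \nabla_\vx f(\vx_2,\vy^*(\vx_1))}\\
&\quad + \norm{\nabla_\vx f(\vx_2,\vy^*(\vx_1)) - \nabla_\vx f(\vx_2,\vy^*(\vx_2))}\\
&\leq \ell\norm{\vx_1-\vx_2} + \ell\norm{\vy^*(\vx_1)-\vy^*(\vx_2)}.
\end{align*}
Substituting the $\kappa$-Lipschitz bound for $\vy^*$ then produces $\norm{\nabla\Phi(\vx_1)-\nabla\Phi(\vx_2)}\leq(\ell+\kappa\ell)\norm{\vx_1-\vx_2}$, which is exactly the claimed gradient-Lipschitz constant. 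All steps besides the $\vy^*$ bound are straightforward applications of the assumptions, so the argument's weight rests on that single estimate.
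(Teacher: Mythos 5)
Your argument is correct and is essentially the standard proof of this result; the paper itself gives no proof, importing the lemma directly from Lemma 3.3 of \cite{lin2019gradient}, and the proof there proceeds exactly as you do: uniqueness of $\vy^*(\cdot)$ from strong concavity, the $\kappa$-Lipschitz bound on $\vy^*(\cdot)$ via the first-order optimality conditions combined with strong concavity and the $\ell$-Lipschitz gradient, and then Danskin's theorem plus a triangle inequality to obtain the $(\ell+\kappa\ell)$ constant. Your preliminary observation that Assumption \ref{asm:F} transfers to $f$ via Jensen's inequality is also the right way to justify using the Lipschitz gradient of $f$ itself.
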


Since $\Phi$ is differentiable, we may define $\eps$-stationary point based on its gradient.
The goal of this paper is to establish a stochastic gradient algorithm that output an $\fO(\eps)$-stationary point in expectation.
\begin{dfn}
    We call $\vx$ an $\fO(\eps)$-stationary point of $\Phi$ if $\norm{\nabla \Phi(\vx)}\leq \fO(\eps)$.
\end{dfn}
We also need the notations of projection and gradient mapping to address the constraint on $\fY$.
\begin{dfn}
    We define the projection of $\vy$ on to convex set $\fY$ by $\Pi_\fY(\vy)=\argmin_{\vz\in\fY}\norm{\vz-\vy}$.
\end{dfn}
\begin{dfn}
    We define the gradient mapping of $f$ at $(\vx',\vy')$ with respect to $\vy$ as follows
    \begin{align*}
        \fG_{\lambda,\vy}(\vx',\vy')=\frac{1}{\lambda}\left(\vy'-\Pi_\fY\left(\vy'+\lambda\nabla_\vy f(\vx',\vy')\right)\right),
        \text{~~where~} \lambda>0. 
    \end{align*}
\end{dfn}

\section{Related Work}\label{section:related}

In this section, we review recent works for solving stochastic nonconvex-strongly-convex minimax problem (\ref{prob:main-phi}) and introduce variance reduction techniques in stochastic optimization.

\subsection{Nonconvex-Strongly-Concave Minimax}
We present SGDmax~\cite{jin2019local,lin2019gradient} in Algorithm \ref{alg:SGDmax}.
We can realize the max-oracle by stochastic gradient ascent (SGA) with $\fO(\kappa^2\eps^{-2}\log(1/\eps))$ stochastic gradient evaluations to achieve sufficient accuracy.  Using $S=\fO(\kappa\eps^{-2})$ guarantees that the variance of the stochastic gradients is less than $\fO(\kappa^{-1}\eps^2)$. It requires $\fO(\kappa\eps^{-2})$ iterations with step size $\eta=\fO(1/(\kappa\ell))$ to obtain an $\fO(\eps)$-stationary point of $\Phi$.
The total stochastic gradient evaluation complexity is $\fO(\kappa^3\eps^{-4}\log(1/\eps))$.
The procedure of SGDA is shown in Algorithm \ref{alg:SGDA}.

Since variables $\vx$ and $\vy$ are not symmetric, we need to select different step sizes for them. 
In our case, we choose $\eta=\fO(1/(\kappa^2\ell))$ and $\lambda=\fO(1/\ell)$.
This leads to an $\fO(\kappa^3\eps^{-4})$ complexity to obtain an $\fO(\eps)$-stationary point with $S=\fO(\kappa\eps^{-2})$ and $\fO(\kappa^2\eps^{-2})$ iterations~\cite{lin2019gradient}. \citeauthor{rafique2018non} proposed proximally guided stochastic mirror descent and variance reduction (PGSMD / PGSVRG) whose complexity is also $\fO(\kappa^3\eps^{-4})$.
Both of the above algorithms reveal that the key of solving problem (\ref{prob:main-phi}) efficiently is to update $\vy$ much more frequently than $\vx$. The natural intuition is that finding stationary point of a nonconvex function is typically more difficult than finding that of a concave or convex function. SGDmax implements it by updating $\vy$ more frequently (SGA in max-oracle) while SGDA iterates $\vy$ with a larger step size such that $\lambda/\eta = \fO(\kappa^2)$.

\subsection{Variance Reduction Techniques}
Variance reduction techniques has been widely used in stochastic optimization
~\cite{nguyen2017sarah,nguyen2018inexact,reddi2016stochastic,allen2017natasha,lei2017non,allen2016variance,fang2018spider,pham2019proxsarah,li2018simple}.
One scheme of this type of methods is StochAstic Recursive grAdient algoritHm (SARAH) \cite{nguyen2017sarah,nguyen2018inexact}.
\citet{nguyen2017sarah} first proposed it for convex minimization and established a convergence result.
For nonconvex optimization, a closely related method is Stochastic Path-Integrated Differential EstimatoR (SPIDER)~\cite{fang2018spider}.
The algorithm estimates the gradient recursively together with a normalization rule, which guarantees the approximation error of the gradient is $\fO(\eps^2)$ at each step.
As a result, it can find $\fO(\eps)$-stationary point of the nonconvex objective in $\fO(\eps^{-3})$ complexity, which matches the lower bound \cite{ACDFSW2019}.
This idea can also be extended to nonsmooth cases~\cite{pham2019proxsarah,wang2019spiderboost}.

It is also possible to employ variance reduction to solve minimax problems. 
Most of the existing works focused on the convex-concave case.
For example, \citet{palaniappan2016stochastic,chavdarova2019reducing}, extend SVRG~\cite{zhang2013linear,johnson2013accelerating} and SAGA~\cite{defazio2014saga} to solving strongly-convex-strongly-concave minimax problem in the finite-sum case, and established a linear convergence.
One may also use the Catalyst framework~\cite{palaniappan2016stochastic,lin2018catalyst} 
and proximal point iteration~\cite{luo2019stochastic,defazio2016simple} to further accelerate when the problem is ill-conditioned.
\citet{du2017stochastic,du2018linear} pointed out that for some special cases, the strongly-convex and strongly-concave assumptions of linear convergence for minimax problem may not be necessary.
Additionally, \citet{zhang2019multi} solved multi-level composite optimization problems by variance reduction, but the oracles in their algorithms are different from our settings.  

\begin{algorithm}[ht]
    \caption{SGDmax}\label{alg:SGDmax}
	\begin{algorithmic}[1]
	\STATE \textbf{Input} initial point $\vx_0$, learning rate $\eta>0$, batch size $S>0$, max-oracle accuracy $\zeta$ \\[0.1cm]
    \STATE \textbf{for}  $k=0,\dots,K$ \textbf{do} \\[0.1cm]
    \STATE\quad draw $S$ samples $\{\vxi_1,\dots,\vxi_{S}\}$ \\[0.1cm]
    \STATE\quad find $\vy_k$ so that $\BE[f(\vx_k,\vy_k)]\geq \max_{\vy\in\fY} f(\vx_k,\vy)-\zeta$ \\[0.1cm]
    \STATE\quad $\vx_{k+1}=\vx_k - \eta \cdot \frac{1}{S}\sum_{i=1}^{S} \nabla_\vx  F(\vx_k,\vy_k;\vxi_i)$ \\[0.1cm]
    \STATE \textbf{end for} \\[0.1cm]
	\STATE \textbf{Output} $\hat\vx$ chosen uniformly at random from $\{\vx_i\}_{i=0}^K$
	\end{algorithmic}
\end{algorithm}

\begin{algorithm}[ht]
    \caption{SGDA}\label{alg:SGDA}
	\begin{algorithmic}[1]
	\STATE \textbf{Input} initial point $(\vx_0, \vy_0)$, learning rates $\eta>0$ and $\lambda>0$, batch size $S>0$ \\[0.1cm]
    \STATE \textbf{for}  $k=0,\dots,K$ \textbf{do} \\[0.1cm]
    \STATE\quad draw $M$ samples $\{\vxi_1,\dots,\vxi_{S}\}$ \\[0.1cm]
    \STATE\quad $\vx_{k+1}=\vx_k - \eta \cdot \frac{1}{S}\sum_{i=1}^{S} \nabla_\vx F(\vx_k,\vy_k;\vxi_i)$ \\[0.1cm]
    \STATE\quad $\vy_{k+1}=\Pi_\fY\left(\vy_k + \lambda \cdot \frac{1}{S}\sum_{i=1}^{S} \nabla_\vy F(\vx_k,\vy_k;\vxi_i)\right)$ \\[0.1cm]
    \STATE \textbf{end for} \\[0.1cm]
	\STATE \textbf{Output} $\hat\vx$ chosen uniformly at random from $\{\vx_i\}_{i=0}^K$
	\end{algorithmic}
\end{algorithm}

\begin{algorithm}[ht]
    \caption{SREDA}\label{alg:SREDA}
	\begin{algorithmic}[1]
	\STATE \textbf{Input} initial point $\vx_0$, learning rates $\eta_k,\lambda>0$, batch sizes $S_1,S_2>0$; periods $q,m>0$, number of initial iterations $K_0$ \\[0.1cm]
    \STATE\label{line:PiSARAH} $\vy_0=\text{PiSARAH}\left(-f(\vx_k,\cdot),~K_0\right)$ \\[0.1cm]
    \STATE \textbf{for}  $k=0,\dots,K-1$ \textbf{do} \\[0.1cm]
    \STATE\quad \textbf{if} $\mod(k, q)=0$ \\[0.1cm]
    \STATE\quad\quad\label{line:S1-1} draw $S_1$ samples $\{\vxi_1,\dots,\vxi_{S_1}\}$ \\[0.1cm]
    \STATE\quad\quad\label{line:S1-2} $\vv_k=\frac{1}{S_1}\sum_{i=1}^{S_1} \nabla_\vx F(\vx_k,\vy_k;\vxi_i)$ \\[0.1cm]
    \STATE\quad\quad\label{line:S1-3} $\vu_k=\frac{1}{S_1}\sum_{i=1}^{S_1} \nabla_\vy F(\vx_k,\vy_k;\vxi_i)$ \\[0.1cm]
    \STATE\quad\textbf{else} \\[0.1cm]
    \STATE\label{line:vv}\quad\quad $\vv_k=\vv'_{k}$ \\[0.1cm]
    \STATE\label{line:vu}\quad\quad $\vu_k=\vu'_{k}$ \\[0.1cm]
    \STATE\quad \textbf{end if}\\[0.1cm]
    \STATE\quad $\vx_{k+1}=\vx_k - \eta_k \vv_k$ \\[0.1cm]
    \STATE\quad\label{line:inner} $(\vy_{k+1},\vv'_{k+1},\vu'_{k+1})=\text{ConcaveMaximizer}\left(k, m, S_2, \vx_k, \vx_{k+1}, \vy_k, \vu_k, \vv_k \right)$ \\[0.1cm]
    \STATE \textbf{end for} \\[0.1cm]
	\STATE \textbf{Output} $\hat\vx$ chosen uniformly at random from $\{\vx_i\}_{i=0}^{K-1}$
	\end{algorithmic}
\end{algorithm}

\section{Algorithms and Main Results}\label{section:SREDA}
In this section, we propose a novel algorithm for solving problem (\ref{prob:main-phi}), which we 
call Stochastic Recursive gradiEnt Descent Ascent (SREDA).
We show that the algorithm finds an $\fO(\eps)$-stationary point with
a complexity of $\fO(\kappa^3\eps^{-3})$ stochastic gradient evaluations,
and this result may be extended to the finite-sum case (\ref{prob:main-finite}).

\subsection{Stochastic Recursive Gradient Descent Ascent}

SREDA uses variance reduction to track the gradient estimator recursively.
Because there are two variables $\vx$ and $\vy$ in our problem (\ref{prob:main-phi}), it is not efficient to combine SGDA with SPIDER~\cite{fang2018spider} or (inexact) SARAH~\cite{nguyen2017sarah,nguyen2018inexact} directly.
The algorithm should approximate the gradient of $f(\vx_k,\vy_k)$ with small error,
and keep the value of $f(\vx_k,\vy_k)$ sufficiently close to $\Phi(\vx_k)$.
To achieve this, in the proposed method SREDA,  we employ a concave maximizer with stochastic variance reduced gradient ascent on $\vy$.
The details of SREDA and the concave maximizer are presented in Algorithm \ref{alg:SREDA} and Algorithm \ref{alg:inner} respectively.
In the rest of this section, we show SREDA can find an $\fO(\eps)$-stationary point in $\fO(\kappa^3\eps^{-3})$ stochastic gradient evaluations.

In the initialization of SREDA, we hope to obtain  $\vy_0\approx\arg\max_{\vy\in\fY}f(\vx_0,\vy_0)$ for given $\vx_0$ such that $\BE\norm{\fG_{\lambda,\vy} (\vx_0,\vy_0)}^2 \leq \fO(\kappa^{-2}\eps^2)$ . We proposed a new algorithm called projected inexact SARAH (PiSARAH) to address it. PiSARAH extends inexact SARAH (iSARAH)~\cite{nguyen2018inexact} to constrained case, which could achieve the desired accuracy of our initialization with a complexity of $\fO(\kappa^2\eps^{-2}\log(\kappa/\eps))$. We present the details of PiSARAH in Appendix C.


SREDA estimates the gradient of $f(\vx_k,\vy_k)$ by
$(\vv_k, \vu_k) \approx \left(\nabla_\vx f(\vx_k,\vy_k), \nabla_\vy f(\vx_k,\vy_k)\right)$.
As illustrated in Algorithm \ref{alg:inner},
we evaluate the gradient of $f$ with a large batch size $S_1=\fO(\kappa^2\eps^{-2})$ at the beginning of each period, and update the gradient estimate recursively in concave maximizer with a smaller batch size $S_2=\fO(\kappa\eps^{-1})$.

For variable $\vx_k$, we adopt a normalized stochastic gradient descent with a learning rate for theoretical analysis:
{\small\begin{align*}
    \eta_k=\min\left(\dfrac{\eps}{\ell\norm{\vv_k}}, \frac{1}{2\ell}\right)\cdot\fO(\kappa^{-1}).
\end{align*}}
With this step size, the change of $\vx_k$ is not dramatic at each iteration, which leads to accurate gradient estimates. To simplify  implementations of the algorithm, we can also use a fixed learning rate in practical.

For variable $\vy_k$, we additionally expect $f(\vx_k, \vy_k)$ is a good approximation of $\Phi(\vx_k)$, which implies the gradient mapping with respect to $\vy_k$ should be small enough. We hope to maintain the inequality $\BE\norm{\fG_{\lambda,\vy} (\vx_k,\vy_k)}^2 \leq \fO(\kappa^{-2}\eps^2)$ holds.
Hence, we include a multi-step concave maximizer to update $\vy$ whose details given in Algorithm \ref{alg:inner}.
This procedure can be regarded as one epoch of PiSARAH. We choose the step size $\lambda=\fO(1/\ell)$ for inner iterations, which simultaneously ensure that the gradient mapping with respect to $\vy$ is small enough and the change of $\vy$ is not dramatic. 

\begin{algorithm*}[t]
    \caption{${\rm ConcaveMaximizer}~(k, m, S_2, \vx_k, \vx_{k+1}, \vy_k, \vu_k, \vv_k)$}
	\label{alg:inner}
	\begin{algorithmic}[1]
	\STATE \textbf{Initialize} $\tx_{k,-1}=\vx_{k}$, $\ty_{k,-1}=\vy_{k}$, $\tx_{k,0}=\vx_{k+1}$, $\ty_{k,0}=\vy_k$. \\[0.1cm]
    \STATE draw $S_2$ samples $\{\vxi_{1},\dots,\vxi_{S_2}\}$\\[0.1cm]
    \STATE $\tv_{k,0}=\vv_{k}+\frac{1}{S_2}\sum_{i=1}^{S_2} \nabla_\vx F(\tx_{k,0},\ty_{k,0};\vxi_{i})-\frac{1}{S_2}\sum_{i=1}^{S_2} \nabla_\vx F(\tx_{k,-1},\ty_{k,-1};\vxi_{i})$ \\[0.1cm]
    \STATE $\tu_{k,0}=\vu_{k}+\frac{1}{S_2}\sum_{i=1}^{S_2} \nabla_\vy F(\tx_{k,0},\ty_{k,0};\vxi_{i})-\frac{1}{S_2}\sum_{i=1}^{S_2} \nabla_\vy F(\tx_{k,-1},\ty_{k,-1};\vxi_{i})$ \\[0.1cm]
    \STATE $\tx_{k,1}=\tx_{k,0}$ \\[0.1cm]
    \STATE $\ty_{k,1}=\ty_{k,0} + \lambda\tu_{k,0}$ \\[0.1cm]
    \STATE \textbf{for}  $t=1,\dots,m+1$ \textbf{do} \\[0.1cm]
    \STATE \quad draw $S_2$ samples $\{\vxi_{t,1},\dots,\vxi_{t,S_2}\}$ \\[0.1cm]
    \STATE\label{line:inner-v} \quad $\tv_{k,t}=\tv_{k,t-1}+\frac{1}{S_2}\sum_{i=1}^{S_2} \nabla_\vx F(\tx_{k,t},\ty_{k,t};\vxi_{t,i})-\frac{1}{S_2}\sum_{i=1}^{S_2} \nabla_\vx F(\tx_{k,t-1},\ty_{k,t-1};\vxi_{t,i})$ \\[0.1cm]
    \STATE\label{line:inner-u} \quad $\tu_{k,t}=\tu_{k,t-1}+\frac{1}{S_2}\sum_{i=1}^{S_2} \nabla_\vy F(\tx_{k,t},\ty_{k,t};\vxi_{t,i})-\frac{1}{S_2}\sum_{i=1}^{S_2} \nabla_\vy F(\tx_{k,t-1},\ty_{k,t-1};\vxi_{t,i})$ \\[0.1cm]
    \STATE\quad $\tx_{k,t+1} = \tx_{k,t}$ \\[0.1cm]
    \STATE\quad $\ty_{k,t+1} = \Pi_\fY\left(\ty_{k,t} + \lambda \tu_{k,t}\right)$ \label{line:proj_y}\\[0.1cm]
    \STATE \textbf{end for} \\[0.1cm]
	\STATE\label{line:inner-out} \textbf{Output} 
        $\ty_{k,s_k}$, $\tv_{k,s_k}$ and $\tu_{k,s_k}$ where $s_k$ is chosen uniformly at random from $\{1,\dots,m\}$
	\end{algorithmic}
\end{algorithm*}

\subsection{Complexity Analysis}\label{subsection:complexity}

As shown in Algorithm \ref{alg:SREDA},  SREDA updates variables with a large batch size per $q$ iterations. 
We choose $q=\fO(\eps^{-1})$ as a balance between the number of large batch evaluations with $S_1=\fO(\kappa^2\eps^{-2})$ samples and the concave maximizer with $\fO(\kappa)$ iterations and $S_2=\fO(\kappa\eps^{-1})$ samples. 

Based on above parameter setting, we can obtain an approximate stationary point $\hat\vx$ in expectation such that
$\BE\norm{\nabla\Phi(\hat\vx)}\leq\fO(\eps)$ with $K=\fO(\kappa\eps^{-2})$ outer iterations.
The total number of stochastic gradient evaluations of SREDA comes from the initial run of PiSARAH, large batch gradient evaluation ($S_1$ samples) and concave maximizer. That is,
\begin{align*}
 \fO(\kappa^2\eps^{-2}\log(\kappa/\eps))+\fO\left(K/q\cdot S_1\right) + \fO\left(K\cdot S_2\cdot m\right) 
= \fO(\kappa^3\eps^{-3}).
\end{align*}
Let $\Delta_f=f(\vx_0,\vy_0)+\frac{134\eps^2}{\kappa\ell}-\Phi^*$, then we formally present the main result in Theorem \ref{thm:main}.
\begin{thm}\label{thm:main}
    Under Assumptions \ref{asm:bound}-\ref{asm:grad} with the following parameter choices:
    \begin{align*}
        & \zeta = \kappa^{-2}\eps^2,~
        \eta_k=\min\left(\dfrac{\eps}{5\kappa\ell\norm{\vv_k}}, \dfrac{1}{10\kappa\ell}\right),~
        \lambda=\dfrac{1}{8\ell},~  
          S_1=\left\lceil\frac{2250}{19}\sigma^2\kappa^{-2}\eps^2\right\rceil, \\[0.1cm]
          & S_2=\left\lceil\frac{3687}{76}\kappa q\right\rceil,~
          q = \left\lceil\eps^{-1}\right\rceil,~
          K = \left\lceil\dfrac{100\kappa\ell \eps^{-2}\Delta_f}{9}\right\rceil~\text{and}~
        m=\lceil1024\kappa\rceil,
    \end{align*}
    Algorithm \ref{alg:SREDA} outputs $\hat\vx$ such that
    $\BE\norm{\nabla \Phi(\hat\vx)} \leq 1504\eps$
    with $\fO(\kappa^3\eps^{-3})$ stochastic gradient evaluations.
\end{thm}

We should point out the complexity shown in Theorem \ref{thm:main} gives optimal dependency on $\eps$.
We consider the special case of minimax problem whose objective function has the form 
\begin{align*}
    f(\vx, \vy) = g(\vx) + h(\vy)
\end{align*}
where $g$ is possibly nonconvex and $h$ is strongly-concave, which leads to minimizing on $\vx$ and maximizing on $\vy$ are independent. 

Consequently, finding $\fO(\eps)$-stationary point of the corresponding $\Phi(\vx)$ can be reduced to finding $\fO(\eps)$-stationary point of nonconvex function $g(\vx)$, which is based on the stochastic first order-oracle $\nabla_\vx F(\vx,\vy;\xi)=\nabla g(\vx;\xi)$ (this equality holds for any $\vy$ since $\vx$ and $\vy$ are independent). Hence, the analysis of stochastic nonconvex minimization problem~\cite{ACDFSW2019} based on $\nabla g(\vx;\xi)$ can directly lead to the $\fO(\eps^{-3})$ lower bound for our minimax problem.
We can prove it by constructing the separate function as 
$f(\vx,\vy) = g(\vx) + h(\vy)$
where $g$ is the nonconvex function in \citeauthor{ACDFSW2019}'s ~\cite{ACDFSW2019} lower bound analysis of stochastic nonconvex minimization,
and $h$ is an arbitrary smooth, $\mu$-strongly concave function.
It is obvious that the lower bound complexity of finding an
$\fO(\eps)$-stationary point of $\Phi$ is no smaller than that of finding an
$\fO(\eps)$-stationary point of $g$, which requires at least
$\fO(\eps^{-3})$ stochastic gradient evaluations~\cite{ACDFSW2019}. 

\subsection{Extension to Finite-sum Case}
SREDA also works for nonconvex-strongly-concave minimax optimization in the finite-sum case (\ref{prob:main-finite}) with little modification of Algorithm \ref{alg:SREDA}.
We just need to replace line 5-7 of Algorithm \ref{alg:SREDA} with the full gradients, and use projected SARAH (PSARAH)\footnote{PSARAH extends SARAH~\cite{nguyen2017sarah} to constrained case, which requires $\fO\left((n+\kappa)\log(\kappa/\eps)\right)$ stochastic gradient evaluation to achieve sufficient accuracy for our initialization. Please  see Appendix E.1 for details} to initialization. We present the details in Algorithm \ref{alg:SREDA-finite}.
The algorithm is more efficient than Minimax PPA~\cite{lin2020near} when $n\geq\kappa^2$.
We state the result formally in Theorem \ref{thm:finite}.

\begin{algorithm}[t]
    \caption{SREDA (Finite-sum Case)}\label{alg:SREDA-finite}
	\begin{algorithmic}[1]
	\STATE \textbf{Input} initial point $\vx_0$, learning rates $\eta_k,\lambda>0$, batch sizes $S_1,S_2>0$; periods $q,m>0$; number of initial iterations $K_0$ \\[0.1cm]
    \STATE \label{line:SARAH}$\vy_0=\text{PSARAH}\left(-f(\vx_k,\cdot),~K_0\right)$ \\[0.1cm]
    \STATE \textbf{for}  $k=0,\dots,K-1$ \textbf{do} \\[0.1cm]
    \STATE\quad \textbf{if} $\mod(k, q)=0$ \\[0.1cm]
    \STATE\quad\quad $\vv_k=\nabla_\vx f(\vx_k,\vy_k)$ \\[0.1cm]
    \STATE\quad\quad $\vu_k=\nabla_\vy f(\vx_k,\vy_k)$ \\[0.1cm]
    \STATE\quad\textbf{else} \\[0.1cm]
    \STATE\label{line:f-vv}\quad\quad $\vv_k=\vv'_{k}$ \\[0.1cm]
    \STATE\label{line:f-vu}\quad\quad $\vu_k=\vu'_{k}$ \\[0.1cm]
    \STATE\quad\textbf{end if}\\[0.1cm]
    \STATE\quad $\vx_{k+1}=\vx_k - \eta_k \vv_k$ \\[0.1cm]
    \STATE\label{line:-finner}\quad
    \STATE \textbf{end for} \\[0.1cm]
	\STATE \textbf{Output} $\hat\vx$ chosen uniformly at random from $\{\vx_i\}_{i=0}^{K-1}$
	\end{algorithmic}
\end{algorithm}

\begin{thm}\label{thm:finite}
Suppose Assumption \ref{asm:bound}-\ref{asm:f} hold. 
In the finite-sum case with $n\geq\kappa^2$, we set the parameters
\begin{align*}
& \zeta = \kappa^{-2}\eps^2,~
\eta_k=\min\left(\dfrac{\eps}{5\kappa\ell\norm{\vv_k}}, \dfrac{1}{10\kappa\ell}\right),~  
\lambda=\dfrac{2}{7\ell},~
q = \lceil\kappa^{-1}n^{1/2}\rceil,~\\[0.1cm]
& S_2 = \left\lceil\frac{3687}{76}\kappa q\right\rceil,~ 
 K = \left\lceil\dfrac{100\kappa\ell \eps^{-2}\Delta_f}{9}\right\rceil,~\text{and}~
m = \left\lceil 1024\kappa \right\rceil. 
\end{align*}
Algorithm \ref{alg:SREDA-finite} outputs $\hat\vx$ such that
$\BE\norm{\nabla \Phi(\hat\vx)} \leq 1504\eps$
with $\fO\left(n\log(\kappa/\eps) + \kappa^2n^{1/2}\eps^{-2}\right)$ stochastic gradient evaluations.

In the case of $n\leq\kappa^2$, we set the parameters 
\begin{align*}
    & \zeta = \kappa^{-2}\eps^2,~
\eta_k=\min\left(\dfrac{\eps}{5\kappa\ell\norm{\vv_k}}, \dfrac{1}{10\kappa\ell}\right),~
\lambda=\dfrac{1}{8\ell},~ q = 1, \\[0.1cm]
    & S_2 = 1, ~
    K = \left\lceil\dfrac{100\kappa\ell \eps^{-2}\Delta_f}{9}\right\rceil,~\text{and}~
    m = \left\lceil 1024\kappa  \right\rceil.
\end{align*}
Algorithm \ref{alg:SREDA-finite} outputs $\hat\vx$ such that
$\BE\norm{\nabla \Phi(\hat\vx)} \leq 1504\eps$
with $\fO\left((\kappa^2 + \kappa n)\eps^{-2} \right)$ stochastic gradient evaluations.
\end{thm}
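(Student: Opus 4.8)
The plan is to reduce Theorem~\ref{thm:finite} to the analysis already carried out for Theorem~\ref{thm:main}, since Algorithm~\ref{alg:SREDA-finite} differs from Algorithm~\ref{alg:SREDA} in only two places: at a period boundary the estimates $\vv_k,\vu_k$ are now the exact full gradients $\nabla_\vx f(\vx_k,\vy_k),\nabla_\vy f(\vx_k,\vy_k)$ instead of $S_1$-sample averages, and the initialization uses SARAH in place of iSARAH. Every remaining parameter ($\eta_k,\lambda,S_2,K,m$) is set by the identical formula. First I would record the single structural fact that drives everything: at each boundary $k$ with $\mod(k,q)=0$ the estimator error $\BE\norm{\vv_k-\nabla_\vx f(\vx_k,\vy_k)}^2+\BE\norm{\vu_k-\nabla_\vy f(\vx_k,\vy_k)}^2$ is exactly zero, which is no larger than the bound $\fO(\kappa^{-2}\eps^2)$ that the choice $S_1=\lceil 24\sigma^2\kappa^2\eps^{-2}\rceil$ was designed to enforce in the stochastic case.

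Next I would invoke the core estimation lemma behind Theorem~\ref{thm:main}, which controls how the recursive SARAH-style error grows across the $\fO(qm)$ inner updates of one period. Its essential feature is that it depends on $q$ only through $S_2=\lceil\frac{7368}{175}\kappa q\rceil$: because $S_2$ scales proportionally to $q$, the factor $q\kappa/S_2$ governing the accumulated error over a full period is \emph{independent} of $q$, so that accumulated error stays bounded by $\fO(\kappa^{-2}\eps^2)$. Combined with the zero boundary error above, this keeps $\BE\norm{\nabla_\vy f(\vx_k,\vy_k)}^2\leq\fO(\kappa^{-2}\eps^2)$ and the $\vx$-estimation error $\fO(\eps^2)$ throughout, exactly as before. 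Feeding these into the same descent argument for $\Phi$ under the normalized step $\eta_k=\min(\eps/(5\kappa\ell\norm{\vv_k}),1/(10\kappa\ell))$ and telescoping over $K=\lceil 100\kappa\ell\eps^{-2}\Delta_f/9\rceil$ iterations yields the identical output guarantee $\BE\norm{\nabla\Phi(\hat\vx)}\leq\frac{1073}{108}\eps$; no constant needs revision because the finite-sum boundary error only improves on its stochastic counterpart.

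With correctness inherited, the remaining work is the gradient count, split into three pieces. The SARAH initialization on the strongly concave $-f(\vx_0,\cdot)$ to reach $\BE\norm{\nabla_\vy f(\vx_0,\vy_0)}^2\leq\zeta=\kappa^{-2}\eps^2$ costs $\fO((n+\kappa)\log(\kappa/\eps))$, which equals $\fO(n\log(\kappa/\eps))$ when $n\geq\kappa^2$ and is dominated otherwise. Each boundary evaluates a full gradient at cost $n$, incurred $\fO(K/q)$ times, and each of the $K$ outer steps runs the concave maximizer at cost $\fO(S_2 m)$. Substituting $q=\lceil\kappa^{-1}n^{1/2}\rceil$ gives $S_2=\fO(\kappa q)=\fO(n^{1/2})$, so both the boundary cost $\fO(nK/q)$ and the inner cost $\fO(K S_2 m)$ equal $\fO(\kappa^2 n^{1/2}\eps^{-2})$, for the stated total $\fO(n\log(\kappa/\eps)+\kappa^2 n^{1/2}\eps^{-2})$. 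The choice of $q$ is exactly the balance point: $nK/q$ decreases in $q$ while $K\cdot\kappa q\cdot\kappa$ increases in $q$, and equating them gives $q=\Theta(\kappa^{-1}n^{1/2})$. In the regime $n\leq\kappa^2$ this optimal $q$ drops below one, so I would instead take $q=1$ and $S_2=1$ as stated: the full gradient is recomputed every outer step (making the $\vx$-estimate exact) while the concave maximizer still runs $m=\fO(\kappa)$ cheap inner steps, the same descent analysis again gives the output bound, and the count becomes $\fO(nK)=\fO(\kappa n\eps^{-2})$ plus $\fO(K S_2 m)=\fO(\kappa^2\eps^{-2})$, i.e.\ $\fO((\kappa^2+\kappa n)\eps^{-2})$.

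The main obstacle I anticipate is not the bookkeeping but verifying rigorously that the per-period error bound from the Theorem~\ref{thm:main} analysis is genuinely $q$-insensitive once $S_2\propto q$, and in particular that replacing $q=\lceil\eps^{-1}\rceil$ by $q=\lceil\kappa^{-1}n^{1/2}\rceil$ does not silently violate an inequality that implicitly assumed $q\geq\eps^{-1}$ (for instance, any place where a $q$-dependent quantity was absorbed into an $\eps$-dependent one). I would therefore re-examine each lemma in that proof, isolate every appearance of $q$, and confirm it either cancels against $S_2$ or is harmless at the new value; this is precisely where a hidden constraint on $q$ would surface, and it is why the $n\leq\kappa^2$ case must be treated by the separate $q=1,S_2=1$ choice rather than by directly specializing the first parameter set.
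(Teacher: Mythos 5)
Your proposal is correct and follows essentially the same route as the paper: reduce to the Theorem~\ref{thm:main} analysis by observing that the boundary error $\Delta_{k_0}$ is now exactly zero, that Lemma~\ref{lem:Delta} is insensitive to the new $q$ because $S_2\propto q$, and then redo the three-part gradient count, with the $n\leq\kappa^2$ case handled separately by setting $q=S_2=1$ so that $\Delta_k=0$ for all $k$ and only the $\delta_k$ recursion needs re-checking. Your flagged verification step (isolating every appearance of $q$ in the lemmas) is precisely the point the paper asserts without elaboration, so nothing is missing.
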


\section{Sketch of Proofs}\label{section:proofs}

We present the briefly overview of the proof of Theorem \ref{thm:main}. The details are shown in appendix. Different from \citeauthor{lin2019gradient}'s analysis of SGDA~\cite{lin2019gradient} which directly considered the value of
$\Phi(\vx_k)$ and the distance $\norm{\vy_k-\vy^*(\vx_k)}$, our proof mainly depends on $f(\vx_k, \vy_k)$ and its gradient. We split the change of objective functions after one iteration on $(\vx_k, \vy_k)$ into $A_k$ and $B_k$ as follows
\begin{align}
      f(\vx_{k+1}, \vy_{k+1}) - f(\vx_{k}, \vy_{k}) 
 =   \underbrace{f(\vx_{k+1}, \vy_{k}) - f(\vx_{k}, \vy_{k})}_{A_k} + \underbrace{f(\vx_{k+1}, \vy_{k+1})- f(\vx_{k+1}, \vy_{k})}_{B_k}, \label{ieq:fAB}
\end{align}
where $A_k$ provides the decrease of function value $f$ and $B_k$ can characterize the difference between $f(\vx_{k+1},\vy_{k+1})$ and $\Phi(\vx_{k+1})$.
We can show that
$\BE[A_k] \leq - \fO(\kappa^{-1}\eps)$ and $\BE[B_k] \leq \fO(\kappa^{-1}\eps^2/\ell)$.
By taking the average of (\ref{ieq:fAB}) over $k=0,\dots,K$, we obtain 
\begin{align*}
    \frac{1}{K}\sum_{k=0}^{K-1} \BE \norm{\vv_k} \leq  \fO(\eps).
\end{align*}
We can also approximate $\BE\norm{\nabla\Phi(\vx_k)}$ by $\BE\norm{\vv_k}$ with $\fO(\eps)$ estimate error. Then the output $\hat\vx$ of Algorithm \ref{alg:SREDA} satisfies $\BE\norm{\nabla\Phi(\vx_k)} \leq \fO(\eps)$. Based on the discussion in Section
\ref{subsection:complexity}, the number of stochastic gradient evaluation is $\fO(\kappa^3\eps^{-3})$.  We can also use similar idea to prove Theorem \ref{thm:finite}.

\section{Numerical Experiments}\label{section:experiment}

We conduct the experiments by using distributionally robust optimization with nonconvex regularized logistic loss~\cite{duchi2019variance,yan2019stochastic,kohler2017sub,antoniadis2011penalized}. Given dataset $\left\{(\va_i, b_i)\right\}_{i=1}^n$ where $\va_i\in\BR^d$ is the feature of $i$-th sample and $b_i\in\{1,-1\}$ the corresponding label, the minimax formulation is:
\begin{align*}
    \min_{\vx\in\BR^d} \max_{\vy\in{\fY}} f(\vx,\vy)\triangleq \frac{1}{n}\sum_{i=1}^n \Big(y_i l_i(\vx)-V(\vy)+g(\vx)\Big),
\end{align*}
$l_i(\vx) = \log(1+\exp(-b_i \va_i^\top \vx))$,
$g$ is the nonconvex regularizer~\cite{antoniadis2011penalized}:
\begin{align*}
    g(\vx) = \lambda_2\sum_{i=1}^d \frac{\alpha x_i^2}{1+\alpha x_i^2},
\end{align*}
$V(\vy)=\frac{1}{2}\lambda_1\norm{n\vy-\bf 1}^2$ and ${\mathcal Y}=\{\vy\in{\mathbb R}^n: 0 \leq y_i \leq 1, \sum_{i=1}^n y_i=1\}$ is a simplex. 
Following \citeauthor{yan2019stochastic}~\cite{yan2019stochastic}, \citeauthor{kohler2017sub}~\cite{kohler2017sub}'s settings, we let $\lambda_1=1/n^2$, $\lambda_2 = 10^{-3}$ and $\alpha = 10$ for experiments. 

We evaluate compared the performance of SREDA with baseline algorithms GDAmax, GDA, SGDA~\cite{lin2019gradient} and Minimax PPA~\cite{lin2020near} on six real-world data sets ``a9a'', ``w8a'', ``gisette'',  ``mushrooms'', ``sido0'' and ``rcv1'', whose details are listed in Table \ref{table:datasets}.  The dataset ``sido0'' comes from Causality Workbench\footnote{\url{https://www.causality.inf.ethz.ch/challenge.php?page=datasets}} and the others can be downloaded from LIBSVM repository\footnote{\url{https://www.csie.ntu.edu.tw/~cjlin/libsvmtools/datasets/}}. 
Our experiments are conducted on a workstation with Intel Xeon Gold 5120 CPU and 256GB memory. We use MATLAB 2018a to run the code and the operating system is Ubuntu 18.04.4 LTS.

The parameters of the algorithms are chosen as follows: The stepsizes of all algorithms are tuned from $\{10^{-3},10^{-2},10^{-1},1\}$ and we keep the stepsize ratio is $\{10, 10^2, 10^3\}$. For stochastic algorithms SGDA and SREDA, the mini-batch size is set with $\{10, 100, 200\}$. For SREDA, we use the finite-sum version (Algorithm \ref{alg:SREDA-finite} with the first case of Theorem \ref{thm:finite}) and let $q=m=\lceil n/S_2\rceil$ heuristically. The initialization of SREDA is based on PSARAH with $K_0=5$, $b=1$ and $m=20$. For Minimax PPA, we tune the proximal parameter from $\{1, 10, 100\}$ and momentum parameter from $\{0.2, 0.5, 0.7\}$. Each inner loop of Minimax PPA has five times Maximin-AG2 which contains five AGD iterations. 
The results are shown in Figure \ref{fig:exp}. It is clear that SREDA converges  faster than the baseline algorithms. 

\begin{table}[ht]
\newcommand{\tabincell}[2]{\begin{tabular}{@{}#1@{}}#2\end{tabular}}
\renewcommand{\arraystretch}{1.2}
    \centering
    \begin{tabular}{|c|c|c|}
        \hline
        datasets & $n$ & $d$  \\\hline
        a9a & 32,561 & ~~~~~123  \\
        w8a & 49,749 &  ~~~~~300 \\
        gisette & ~~6,000 & ~~5,000 \\
        mushrooms & ~~8,124 & ~~~~~112 \\
        sido0 & 12,678 & ~~4,932 \\
        rcv1 & 20,242 & 47,236 \\\hline
    \end{tabular}
    \renewcommand{\arraystretch}{1}\vskip0.2cm
    \caption{Summary of datasets used in our experiments}
    \label{table:datasets}
\end{table}

\begin{figure}[ht]
\centering
    \begin{tabular}{ccc}
        \includegraphics[scale=0.32]{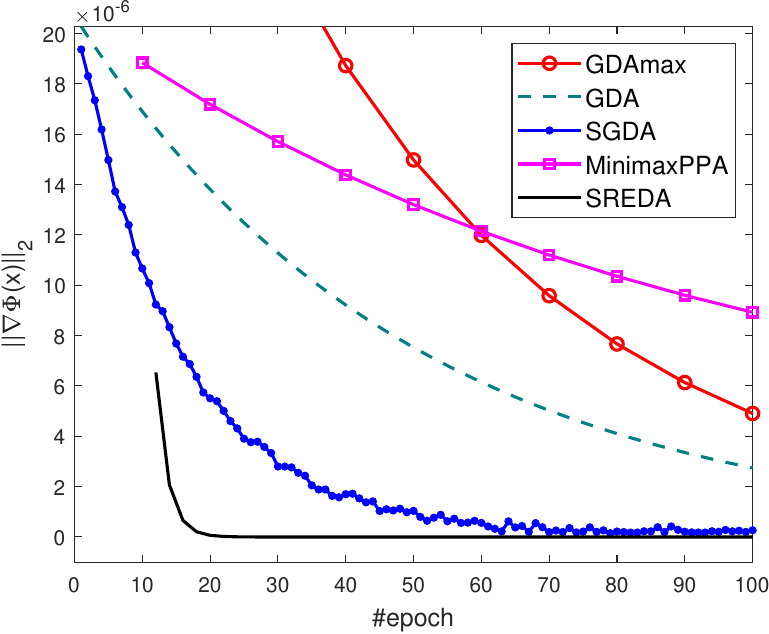} &
        \includegraphics[scale=0.32]{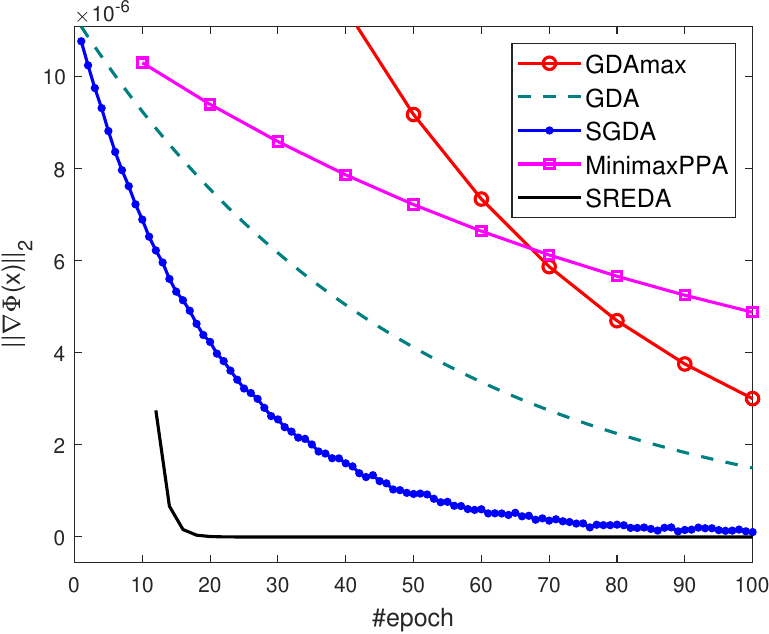} &
        \includegraphics[scale=0.32]{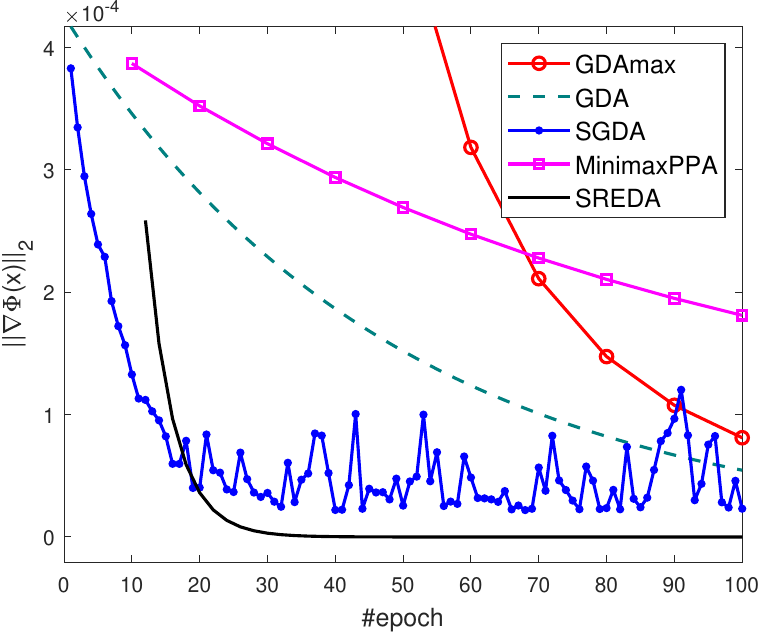} \\
        (a) a9a & (b) w8a & (c) gisette \\[0.15cm]
        \includegraphics[scale=0.32]{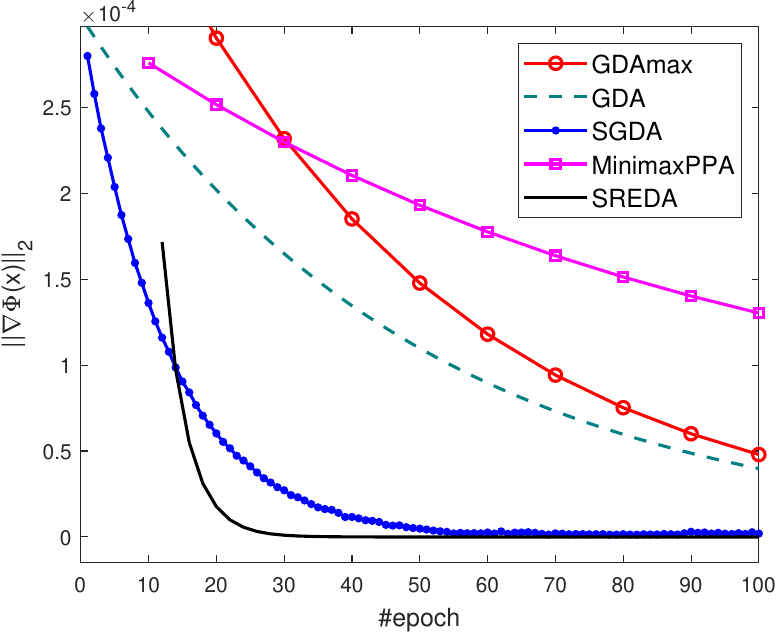} &
        \includegraphics[scale=0.32]{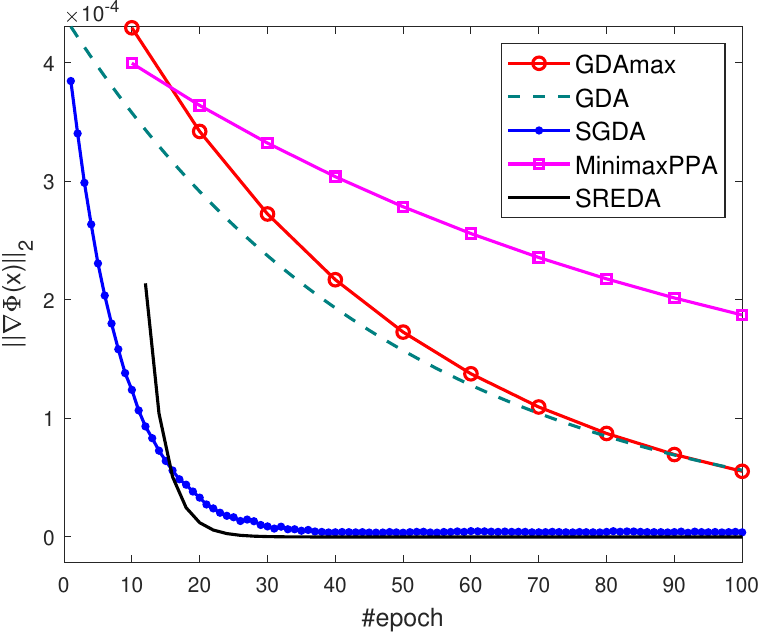} &
        \includegraphics[scale=0.32]{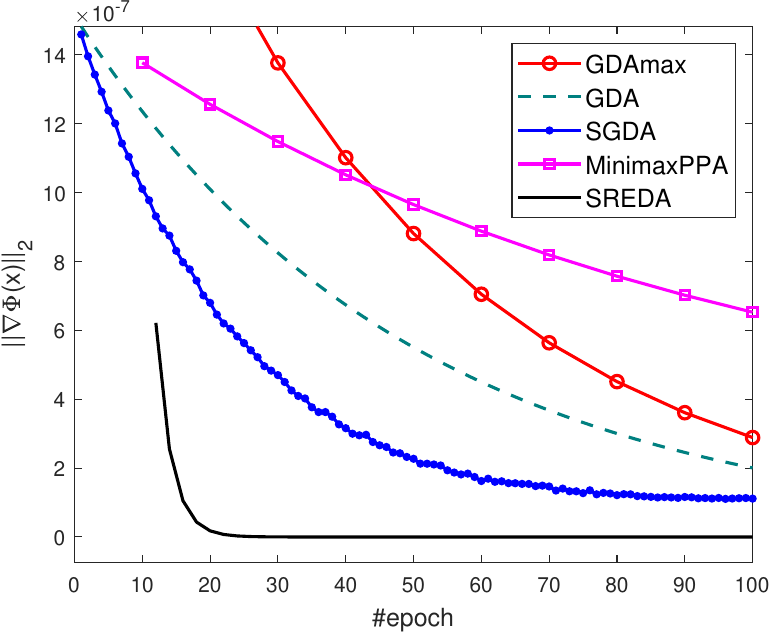} \\ 
        (d) mushrooms & (e) sido0 & (f) rcv1 \\[0.15cm]
    \end{tabular}
    \caption{We demonstrate $\norm{\nabla\Phi(\vx)}$ vs. the number of epochs for DRO model on real-world datasets ``a9a'', ``w8a'', ``gisette'', ``mushrooms'', ``sido0'' and ``rcv1'' with SREDA and baseline algorithms.}\label{fig:exp}
\end{figure}

\section{Conclusion}\label{section:conclusions}

In this paper, we studied stochastic nonconvex-strongly-concave minimax problems.
We proposed a novel algorithm called Stochastic Recursive gradiEnt Descent Ascent (SREDA).
The algorithm employs variance reduction to solve minimax problems.
Based on the appropriate choice of the parameters,
we prove SREDA finds an $\fO(\eps)$-stationary point of $\Phi$ with a
stochastic gradient complexity of ${\mathcal
  O}(\kappa^3\varepsilon^{-3})$. This result is better than
state-of-the-art algorithms and optimal in its dependency on $\eps$.
We can also apply SREDA to the finite-sum case, and show that it performs well when $n$ is larger than $\kappa^2$.

There are still some open problems left.
The complexity of SREDA is optimal with respect to $\eps$, but weather
it is optimal with respect to $\kappa$ is unknown.
It is also possible to employ SREDA to reduce the complexity of
stochastic nonconvex-concave minimax problems without the strongly-concave assumption.

\section*{Broader Impact}

This paper studied the theory of stochastic minimax optimization. The proposed method SREDA is the first stochastic algorithm which attains the optimal dependency on $\eps$. This observation help us to understand the minimax optimization without convex-concave assumption. It is interesting to apply SREDA to more machine learning applications in future.

\begin{ack}
The authors would like to thank Min Tao and Jiahao Xie to point out that the first version of this paper on arXiv has a mistake in the original proof of Theorem \ref{thm:main}.
This work is supported by GRF 16201320 and the project of Shenzhen Research Institute of Big Data (named ``Automated Machine
Learning'').
\end{ack}

{\bibliographystyle{plainnat}
\bibliography{reference}}

\begin{thebibliography}{54}
\providecommand{\natexlab}[1]{#1}
\providecommand{\url}[1]{\texttt{#1}}
\expandafter\ifx\csname urlstyle\endcsname\relax
  \providecommand{\doi}[1]{doi: #1}\else
  \providecommand{\doi}{doi: \begingroup \urlstyle{rm}\Url}\fi

\bibitem[Allen-Zhu(2017{\natexlab{a}})]{allen2017katyusha}
Zeyuan Allen-Zhu.
\newblock Katyusha: The first direct acceleration of stochastic gradient
  methods.
\newblock \emph{Journal of Machine Learning Research}, 18\penalty0
  (1):\penalty0 8194--8244, 2017{\natexlab{a}}.

\bibitem[Allen-Zhu(2017{\natexlab{b}})]{allen2017natasha}
Zeyuan Allen-Zhu.
\newblock Natasha: Faster non-convex stochastic optimization via strongly
  non-convex parameter.
\newblock In \emph{ICML}, 2017{\natexlab{b}}.

\bibitem[Allen-Zhu(2018)]{allen2018katyushax}
Zeyuan Allen-Zhu.
\newblock Katyusha {X}: Practical momentum method for stochastic
  sum-of-nonconvex optimization.
\newblock In \emph{ICML}, 2018.

\bibitem[Allen-Zhu and Hazan(2016)]{allen2016variance}
Zeyuan Allen-Zhu and Elad Hazan.
\newblock Variance reduction for faster non-convex optimization.
\newblock In \emph{ICML}, 2016.

\bibitem[Antoniadis et~al.(2011)Antoniadis, Gijbels, and
  Nikolova]{antoniadis2011penalized}
Anestis Antoniadis, Ir{\`e}ne Gijbels, and Mila Nikolova.
\newblock Penalized likelihood regression for generalized linear models with
  non-quadratic penalties.
\newblock \emph{Annals of the Institute of Statistical Mathematics},
  63\penalty0 (3):\penalty0 585--615, 2011.

\bibitem[Arjevani et~al.(2019)Arjevani, Carmon, Duchi, Foster, Srebro, and
  Woodworth]{ACDFSW2019}
Yossi Arjevani, Yair Carmon, John~C. Duchi, Dylan~J. Foster, Nathan Srebro, and
  Blake Woodworth.
\newblock Lower bounds for non-convex stochastic optimization.
\newblock \emph{arXiv preprint:1912.02365}, 2019.

\bibitem[Barazandeh and Razaviyayn(2020)]{barazandeh2020solving}
Babak Barazandeh and Meisam Razaviyayn.
\newblock Solving non-convex non-differentiable min-max games using proximal
  gradient method.
\newblock In \emph{ICASSP}, 2020.

\bibitem[Chambolle and Pock(2011)]{chambolle2011first}
Antonin Chambolle and Thomas Pock.
\newblock A first-order primal-dual algorithm for convex problems with
  applications to imaging.
\newblock \emph{Journal of mathematical imaging and vision}, 40\penalty0
  (1):\penalty0 120--145, 2011.

\bibitem[Chavdarova et~al.(2019)Chavdarova, Gidel, Fleuret, and
  Lacoste-Julien]{chavdarova2019reducing}
Tatjana Chavdarova, Gauthier Gidel, Fran{\c{c}}ois Fleuret, and Simon
  Lacoste-Julien.
\newblock Reducing noise in {GAN} training with variance reduced extragradient.
\newblock In \emph{NeurIPS}, 2019.

\bibitem[Defazio(2016)]{defazio2016simple}
Aaron Defazio.
\newblock A simple practical accelerated method for finite sums.
\newblock In \emph{NIPS}, 2016.

\bibitem[Defazio et~al.(2014)Defazio, Bach, and
  Lacoste-Julien]{defazio2014saga}
Aaron Defazio, Francis Bach, and Simon Lacoste-Julien.
\newblock {SAGA}: A fast incremental gradient method with support for
  non-strongly convex composite objectives.
\newblock In \emph{NIPS}, 2014.

\bibitem[Du and Hu(2018)]{du2018linear}
Simon~S. Du and Wei Hu.
\newblock Linear convergence of the primal-dual gradient method for
  convex-concave saddle point problems without strong convexity.
\newblock In \emph{AISTATS}, 2018.

\bibitem[Du et~al.(2017)Du, Chen, Li, Xiao, and Zhou]{du2017stochastic}
Simon~S. Du, Jianshu Chen, Lihong Li, Lin Xiao, and Dengyong Zhou.
\newblock Stochastic variance reduction methods for policy evaluation.
\newblock In \emph{ICML}, 2017.

\bibitem[Duchi and Namkoong(2019)]{duchi2019variance}
John~C. Duchi and Hongseok Namkoong.
\newblock Variance-based regularization with convex objectives.
\newblock \emph{Journal of Machine Learning Research}, 20\penalty0
  (68):\penalty0 1--55, 2019.

\bibitem[Fang et~al.(2018)Fang, Li, Lin, and Zhang]{fang2018spider}
Cong Fang, Chris~Junchi Li, Zhouchen Lin, and Tong Zhang.
\newblock {SPIDER}: Near-optimal non-convex optimization via stochastic
  path-integrated differential estimator.
\newblock In \emph{NeurIPS}, 2018.

\bibitem[Goodfellow et~al.(2014{\natexlab{a}})Goodfellow, Pouget-Abadie, Mirza,
  Xu, Warde-Farley, Ozair, Courville, and Bengio]{goodfellow2014generative}
Ian Goodfellow, Jean Pouget-Abadie, Mehdi Mirza, Bing Xu, David Warde-Farley,
  Sherjil Ozair, Aaron Courville, and Yoshua Bengio.
\newblock Generative adversarial nets.
\newblock In \emph{NIPS}, 2014{\natexlab{a}}.

\bibitem[Goodfellow et~al.(2014{\natexlab{b}})Goodfellow, Shlens, and
  Szegedy]{goodfellow2014explaining}
Ian Goodfellow, Jonathon Shlens, and Christian Szegedy.
\newblock Explaining and harnessing adversarial examples.
\newblock \emph{arXiv preprint:1412.6572}, 2014{\natexlab{b}}.

\bibitem[Hsieh et~al.(2019)Hsieh, Iutzeler, Malick, and
  Mertikopoulos]{hsieh2019convergence}
Yu-Guan Hsieh, Franck Iutzeler, J{\'e}r{\^o}me Malick, and Panayotis
  Mertikopoulos.
\newblock On the convergence of single-call stochastic extra-gradient methods.
\newblock In \emph{NeurIPS}, 2019.

\bibitem[Jin et~al.(2020)Jin, Netrapalli, and Jordan]{jin2019local}
Chi Jin, Praneeth Netrapalli, and Michael~I. Jordan.
\newblock What is local optimality in nonconvex-nonconcave minimax
  optimization?
\newblock In \emph{ICML}, 2020.

\bibitem[Johnson and Zhang(2013)]{johnson2013accelerating}
Rie Johnson and Tong Zhang.
\newblock Accelerating stochastic gradient descent using predictive variance
  reduction.
\newblock In \emph{NIPS}, 2013.

\bibitem[Kohler and Lucchi(2017)]{kohler2017sub}
Jonas~Moritz Kohler and Aurelien Lucchi.
\newblock Sub-sampled cubic regularization for non-convex optimization.
\newblock In \emph{ICML}, 2017.

\bibitem[Lei et~al.(2017)Lei, Ju, Chen, and Jordan]{lei2017non}
Lihua Lei, Cheng Ju, Jianbo Chen, and Michael~I. Jordan.
\newblock Non-convex finite-sum optimization via {SCSG} methods.
\newblock In \emph{NIPS}, 2017.

\bibitem[Li and Li(2018)]{li2018simple}
Zhize Li and Jian Li.
\newblock A simple proximal stochastic gradient method for nonsmooth nonconvex
  optimization.
\newblock In \emph{NeurIPS}, 2018.

\bibitem[Lin et~al.(2018)Lin, Mairal, and Harchaoui]{lin2018catalyst}
Hongzhou Lin, Julien Mairal, and Zaid Harchaoui.
\newblock Catalyst acceleration for first-order convex optimization: from
  theory to practice.
\newblock \emph{Journal of Machine Learning Research}, 18\penalty0
  (212):\penalty0 1--54, 2018.

\bibitem[Lin et~al.(2020{\natexlab{a}})Lin, Jin, and Jordan]{lin2019gradient}
Tianyi Lin, Chi Jin, and Michael~I. Jordan.
\newblock On gradient descent ascent for nonconvex-concave minimax problems.
\newblock In \emph{ICML}, 2020{\natexlab{a}}.

\bibitem[Lin et~al.(2020{\natexlab{b}})Lin, Jin, and Jordan]{lin2020near}
Tianyi Lin, Chi Jin, and Michael~I. Jordan.
\newblock Near-optimal algorithms for minimax optimization.
\newblock In \emph{COLT}, 2020{\natexlab{b}}.

\bibitem[Lu et~al.(2019)Lu, Tsaknakis, Hong, and Chen]{lu2019hybrid}
Songtao Lu, Ioannis Tsaknakis, Mingyi Hong, and Yongxin Chen.
\newblock Hybrid block successive approximation for one-sided non-convex
  min-max problems: algorithms and applications.
\newblock \emph{arXiv preprint:1902.08294}, 2019.

\bibitem[Luo et~al.(2019)Luo, Chen, Li, Xie, and Zhang]{luo2019stochastic}
Luo Luo, Cheng Chen, Yujun Li, Guangzeng Xie, and Zhihua Zhang.
\newblock A stochastic proximal point algorithm for saddle-point problems.
\newblock \emph{arXiv preprint:1909.06946}, 2019.

\bibitem[Nesterov(2018)]{nesterov2018lectures}
Yurii Nesterov.
\newblock \emph{Lectures on convex optimization}, volume 137.
\newblock Springer, 2018.

\bibitem[Nguyen et~al.(2017)Nguyen, Liu, Scheinberg, and
  Tak{\'a}{\v{c}}]{nguyen2017sarah}
Lam~M. Nguyen, Jie Liu, Katya Scheinberg, and Martin Tak{\'a}{\v{c}}.
\newblock {SARAH}: A novel method for machine learning problems using
  stochastic recursive gradient.
\newblock In \emph{ICML}, 2017.

\bibitem[Nguyen et~al.(2018)Nguyen, Scheinberg, and
  Tak{\'a}{\v{c}}]{nguyen2018inexact}
Lam~M. Nguyen, Katya Scheinberg, and Martin Tak{\'a}{\v{c}}.
\newblock Inexact {SARAH} algorithm for stochastic optimization.
\newblock \emph{arXiv preprint:1811.10105}, 2018.

\bibitem[Nouiehed et~al.(2019)Nouiehed, Sanjabi, Huang, Lee, and
  Razaviyayn]{nouiehed2019solving}
Maher Nouiehed, Maziar Sanjabi, Tianjian Huang, Jason~D. Lee, and Meisam
  Razaviyayn.
\newblock Solving a class of non-convex min-max games using iterative first
  order methods.
\newblock In \emph{NeurIPS}, 2019.

\bibitem[Ostrovskii et~al.(2020)Ostrovskii, Lowy, and
  Razaviyayn]{ostrovskii2020efficient}
Dmitrii~M. Ostrovskii, Andrew Lowy, and Meisam Razaviyayn.
\newblock Efficient search of first-order {N}ash equilibria in
  nonconvex-concave smooth min-max problems.
\newblock \emph{arXiv preprint:2002.07919}, 2020.

\bibitem[Ouyang and Xu(2018)]{ouyang2018lower}
Yuyuan Ouyang and Yangyang Xu.
\newblock Lower complexity bounds of first-order methods for convex-concave
  bilinear saddle-point problems.
\newblock \emph{arXiv preprint:1808.02901}, 2018.

\bibitem[Palaniappan and Bach(2016)]{palaniappan2016stochastic}
Balamurugan Palaniappan and Francis Bach.
\newblock Stochastic variance reduction methods for saddle-point problems.
\newblock In \emph{NIPS}, 2016.

\bibitem[Pham et~al.(2019)Pham, Nguyen, Phan, and Tran-Dinh]{pham2019proxsarah}
Nhan~H. Pham, Lam~M. Nguyen, Dzung~T. Phan, and Quoc Tran-Dinh.
\newblock {ProxSARAH}: An efficient algorithmic framework for stochastic
  composite nonconvex optimization.
\newblock \emph{arXiv preprint:1902.05679}, 2019.

\bibitem[Rafique et~al.(2018)Rafique, Liu, Lin, and Yang]{rafique2018non}
Hassan Rafique, Mingrui Liu, Qihang Lin, and Tianbao Yang.
\newblock Non-convex min-max optimization: Provable algorithms and applications
  in machine learning.
\newblock \emph{arXiv preprint:1810.02060}, 2018.

\bibitem[Reddi et~al.(2016)Reddi, Hefny, Sra, Poczos, and
  Smola]{reddi2016stochastic}
Sashank~J. Reddi, Ahmed Hefny, Suvrit Sra, Barnabas Poczos, and Alex Smola.
\newblock Stochastic variance reduction for nonconvex optimization.
\newblock In \emph{ICML}, 2016.

\bibitem[Schmidt et~al.(2017)Schmidt, Le~Roux, and Bach]{schmidt2017minimizing}
Mark Schmidt, Nicolas Le~Roux, and Francis Bach.
\newblock Minimizing finite sums with the stochastic average gradient.
\newblock \emph{Mathematical Programming}, 162\penalty0 (1-2):\penalty0
  83--112, 2017.

\bibitem[Shen et~al.(2018)Shen, Mokhtari, Zhou, Zhao, and
  Qian]{shen2018towards}
Zebang Shen, Aryan Mokhtari, Tengfei Zhou, Peilin Zhao, and Hui Qian.
\newblock Towards more efficient stochastic decentralized learning: Faster
  convergence and sparse communication.
\newblock In \emph{ICML}, 2018.

\bibitem[Sinha et~al.(2018)Sinha, Namkoong, and Duchi]{sinha2017certifying}
Aman Sinha, Hongseok Namkoong, and John Duchi.
\newblock Certifying some distributional robustness with principled adversarial
  training.
\newblock In \emph{ICLR}, 2018.

\bibitem[Tan et~al.(2018)Tan, Zhang, Ma, and Liu]{tan2018stochastic}
Conghui Tan, Tong Zhang, Shiqian Ma, and Ji~Liu.
\newblock Stochastic primal-dual method for empirical risk minimization with
  {O}(1) per-iteration complexity.
\newblock In \emph{NeurIPS}, 2018.

\bibitem[Thekumparampil et~al.(2019)Thekumparampil, Jain, Netrapalli, and
  Oh]{thekumparampil2019efficient}
Kiran~K. Thekumparampil, Prateek Jain, Praneeth Netrapalli, and Sewoong Oh.
\newblock Efficient algorithms for smooth minimax optimization.
\newblock In \emph{NeurIPS}, 2019.

\bibitem[Wai et~al.(2018)Wai, Yang, Wang, and Hong]{wai2018multi}
Hoi-To Wai, Zhuoran Yang, Zhaoran Wang, and Mingyi Hong.
\newblock Multi-agent reinforcement learning via double averaging primal-dual
  optimization.
\newblock In \emph{NeurIPS}, 2018.

\bibitem[Wang et~al.(2019)Wang, Ji, Zhou, Liang, and
  Tarokh]{wang2019spiderboost}
Zhe Wang, Kaiyi Ji, Yi~Zhou, Yingbin Liang, and Vahid Tarokh.
\newblock {SpiderBoost} and momentum: Faster variance reduction algorithms.
\newblock In \emph{NeurIPS}, 2019.

\bibitem[Xie et~al.(2020)Xie, Luo, Lian, and Zhang]{xie2020lower}
Guangzeng Xie, Luo Luo, Yijiang Lian, and Zhihua Zhang.
\newblock Lower complexity bounds for finite-sum convex-concave minimax
  optimization problems.
\newblock In \emph{ICML}, 2020.

\bibitem[Yan et~al.(2019)Yan, Xu, Lin, Zhang, and Yang]{yan2019stochastic}
Yan Yan, Yi~Xu, Qihang Lin, Lijun Zhang, and Tianbao Yang.
\newblock Stochastic primal-dual algorithms with faster convergence than ${O}
  (1/\sqrt t)$ for problems without bilinear structure.
\newblock \emph{arXiv preprint arXiv:1904.10112}, 2019.

\bibitem[Yan et~al.(2020)Yan, Xu, Lin, Liu, and Yang]{yan2020sharp}
Yan Yan, Yi~Xu, Qihang Lin, Wei Liu, and Tianbao Yang.
\newblock Sharp analysis of epoch stochastic gradient descent ascent methods
  for min-max optimization.
\newblock In \emph{NeurIPS}, 2020.

\bibitem[Ying et~al.(2016)Ying, Wen, and Lyu]{ying2016stochastic}
Yiming Ying, Longyin Wen, and Siwei Lyu.
\newblock Stochastic online {AUC} maximization.
\newblock In \emph{NIPS}, 2016.

\bibitem[Zhang and Xiao(2019)]{zhang2019multi}
Junyu Zhang and Lin Xiao.
\newblock Multi-level composite stochastic optimization via nested variance
  reduction.
\newblock \emph{arXiv preprint:1908.11468}, 2019.

\bibitem[Zhang et~al.(2019)Zhang, Hong, and Zhang]{zhang2019lower}
Junyu Zhang, Mingyi Hong, and Shuzhong Zhang.
\newblock On lower iteration complexity bounds for the saddle point problems.
\newblock \emph{arXiv preprint:1912.07481}, 2019.

\bibitem[Zhang et~al.(2013)Zhang, Mahdavi, and Jin]{zhang2013linear}
Lijun Zhang, Mehrdad Mahdavi, and Rong Jin.
\newblock Linear convergence with condition number independent access of full
  gradients.
\newblock In \emph{NIPS}, 2013.

\bibitem[Zhang and Xiao(2017)]{zhang2017stochastic}
Yuchen Zhang and Lin Xiao.
\newblock Stochastic primal-dual coordinate method for regularized empirical
  risk minimization.
\newblock \emph{The Journal of Machine Learning Research}, 18\penalty0
  (1):\penalty0 2939--2980, 2017.

\bibitem[Zhou et~al.(2018)Zhou, Xu, and Gu]{zhou2018stochastic}
Dongruo Zhou, Pan Xu, and Quanquan Gu.
\newblock Stochastic nested variance reduction for nonconvex optimization.
\newblock In \emph{NeurIPS}, 2018.

\end{thebibliography}

\clearpage
\newpage

\appendix

\section*{Supplementary Materials}

This supplementary materials are organized as follows. Appendix \ref{app:tool} provide several technique lemmas for later analysis. Appendix \ref{app:inner} give some properties for our concave maximizer. Then, Appendix \ref{app:PiSARAH} proposes projected inexact SARAH (PiSARAH), which generalizes SARAH to constrained optimization and we use it for the initialization of SREDA. Appendix \ref{app:main} presents the proof of our main results Theorem \ref{thm:main} and we extend it to prove finite-sum case Theorem \ref{thm:finite} in Appendix \ref{app:finite}. 

\section{Technical Tools}\label{app:tool}

We first present some useful inequalities in convex optimization, martingale variance bound and gradient mapping.
\begin{lem}[{\cite[Theorem 2.1.5 and 2.1.12]{nesterov2018lectures}}]\label{lem:convex}
    Suppose $g(\cdot)$ is $\mu$-strongly convex and has $\ell$-Lipschitz gradient.
    Let $\vw^*$ be the minimizer of $g$.
    Then for any $\vw$ and $\vw'$,
    we have the following inequalities
    \begin{align}
         & \inner{\nabla g(\vw)-\nabla g(\vw')}{\vw-\vw'} \geq \frac{1}{\ell}\norm{\nabla g(\vw) - \nabla g(\vw')}^2, \label{ieq:grad-smooth}\\
         & \inner{\nabla g(\vw) - \nabla g(\vw')}{\vw-\vw'} \geq  \frac{\mu \ell}{\mu+\ell} \norm{\vw-\vw'}^2 + \frac{1}{\mu+\ell}\norm{\nabla g(\vw) - \nabla g(\vw')}^2, \label{ieq:grad-smooth0} 
    \end{align}
\end{lem}

\begin{lem}[{\cite[Lemma 1]{fang2018spider}}]\label{lem:martingale}
    Let $\fV_k$ be estimator of $\fB(\vz_k)$ as
    \begin{align*}
        \fV_k = \fB_{\fS_*}(\vz_k) - \fB_{\fS_*}(\vz_{k-1}) + \fV_{k-1},
    \end{align*}
    where $\fB_{\fS_*}=\frac{1}{|\fS_*|}\sum_{\fB_i\in\fS_*}\fB_i$ satisfies
    \begin{align*}
        \BE\left[\fB_i(\vz_k)-\fB_i(\vz_{k-1}) \mid \vz_0,\dots,\vz_{k-1}\right]
        =  \BE\left[\fV_k-\fV_{k-1} \mid \vz_0,\dots,\vz_{k-1}\right],
    \end{align*}
    and $\fB_i$ is $L$-Lipschitz continuous for any $\fB_i\in\fS_*$.
    Then for all $k=1,\dots,K$, we have
    \begin{align*}
        \BE\norm{\fV_k-\fB(\vz_k)\mid\vz_0,\dots,\vz_{k-1}}^2
        \leq & \norm{\fV_{k-1}-\fB(\vz_{k-1})}^2 + \frac{\ell^2}{|\fS_*|}\BE\left[\norm{\vz_k - \vz_{k-1}}^2\mid \vz_0,\dots,\vz_{k-1}\right].
    \end{align*}
\end{lem}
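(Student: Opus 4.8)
The plan is to follow the standard martingale argument for recursive (SPIDER/SARAH-type) estimators: control the estimation error $\fV_k-\fB(\vz_k)$ after one recursive step by freezing the past and isolating a single conditionally mean-zero fluctuation. First I would substitute the recursion $\fV_k=\fB_{\fS_*}(\vz_k)-\fB_{\fS_*}(\vz_{k-1})+\fV_{k-1}$ and add and subtract $\fB(\vz_{k-1})$ to get the one-step decomposition
\begin{align*}
\fV_k - \fB(\vz_k)
&= \underbrace{\big(\fV_{k-1}-\fB(\vz_{k-1})\big)}_{\text{old error}} \\
&\quad + \underbrace{\Big(\fB_{\fS_*}(\vz_k)-\fB_{\fS_*}(\vz_{k-1}) - \big(\fB(\vz_k)-\fB(\vz_{k-1})\big)\Big)}_{=:\,\fW_k}.
\end{align*}
The old error is measurable with respect to $\vz_0,\dots,\vz_{k-1}$, hence constant under the conditioning, while the stated unbiasedness hypothesis forces $\BE[\fW_k\mid\vz_0,\dots,\vz_{k-1}]=0$.

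Next I would expand the conditional second moment of this sum. Because the old error is deterministic given the history and $\fW_k$ is conditionally centered, the cross term drops out, leaving
\begin{align*}
\BE\big[\norm{\fV_k-\fB(\vz_k)}^2 \mid \vz_0,\dots,\vz_{k-1}\big]
= \norm{\fV_{k-1}-\fB(\vz_{k-1})}^2 + \BE\big[\norm{\fW_k}^2 \mid \vz_0,\dots,\vz_{k-1}\big].
\end{align*}
The entire remaining task is then to bound the conditional variance $\BE[\norm{\fW_k}^2\mid\cdot]$ of the mini-batch increment by $L^2/|\fS_*|$ times $\BE[\norm{\vz_k-\vz_{k-1}}^2\mid\cdot]$.

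For that I would exploit independence of the samples in $\fS_*$. Writing $\fW_k=|\fS_*|^{-1}\sum_{i}\fW_{k,i}$ with per-sample centered increments $\fW_{k,i}=\big(\fB_i(\vz_k)-\fB_i(\vz_{k-1})\big)-\BE[\fB_i(\vz_k)-\fB_i(\vz_{k-1})\mid\cdot]$, the conditional independence and zero mean of the $\fW_{k,i}$ annihilate all cross terms, so $\BE[\norm{\fW_k}^2\mid\cdot]=|\fS_*|^{-2}\sum_i\BE[\norm{\fW_{k,i}}^2\mid\cdot]$. Each summand is a conditional variance, hence at most the second moment $\BE[\norm{\fB_i(\vz_k)-\fB_i(\vz_{k-1})}^2\mid\cdot]$, which by the $L$-Lipschitz continuity of $\fB_i$ is at most $L^2\,\BE[\norm{\vz_k-\vz_{k-1}}^2\mid\cdot]$. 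Summing $|\fS_*|$ identical bounds and dividing by $|\fS_*|^2$ yields the desired $L^2/|\fS_*|$ factor and completes the estimate.

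I expect the only genuinely delicate point to be the probabilistic bookkeeping: one must verify that $\fV_{k-1}-\fB(\vz_{k-1})$ is indeed $\sigma(\vz_0,\dots,\vz_{k-1})$-measurable so that it pulls out of the cross term, and that the fresh batch $\fS_*$ is drawn independently of the history, since this is exactly what both kills the cross term and delivers the $1/|\fS_*|$ variance reduction. Once these measurability and independence facts are in place, the variance-versus-second-moment inequality and the Lipschitz bound are entirely routine.
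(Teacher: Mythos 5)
The paper gives no proof of this lemma---it is imported verbatim as Lemma 1 of \cite{fang2018spider}---so the only meaningful comparison is with that reference, and your argument is essentially its standard proof: the conditionally centered increment $\fW_k$ kills the cross term, conditional independence of the fresh batch $\fS_*$ yields the $1/|\fS_*|$ factor, and variance-bounded-by-second-moment together with the $L$-Lipschitz continuity of each $\fB_i$ finishes the bound. The measurability and independence caveats you flag are exactly the conditions under which the lemma is invoked in this paper (the iterates at which the fresh batch is evaluated are determined by the history before that batch is drawn), so your proof is correct and complete.
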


\begin{lem}[{\cite[Corollary 2.2.3]{nesterov2018lectures}}]\label{lem:nonexpansive}
    Given convex and compact set $\fC\subseteq\BR^d$ and any $\vw, \vw'\in\BR^d$, we have $\norm{\Pi_\fC(\vw)-\Pi_\fC(\vw')}\leq\norm{\vw-\vw'}$.
\end{lem}

\begin{lem}[{\cite[Corollary 2.2.4]{nesterov2018lectures}}]\label{lem:prox2}
Let $g$ has $\ell$-Lipschitz gradient and $\mu$-strongly convex, and $\fC$ is a convex set.
Denote $\fG_\gamma$ be the gradient mapping such that
\begin{align*}
    \fG_\gamma(\vw) = \frac{\vw-\Pi_\fC (\vw-\gamma\nabla g(\vw))}{\gamma},
\end{align*}
where $\gamma \leq 1/\ell$. Let $\vw^*=\argmin_{\vw\in\fC} g(\vw)$,
then we have
\begin{align*}
    \inner{\fG_\gamma(\vw)}{\vw-\vw^*}^2 \geq \frac{\mu}{2}\norm{\vw-\vw^*} + \frac{1}{2\ell}\norm{\fG_\gamma(\vw)}^2
\end{align*}
\end{lem}

\begin{cor}\label{cor:prox2}
Under assumptions of Lemma \ref{lem:prox2},
we have $\frac{\mu}{2}\norm{\vw-\vw^*} \leq \norm{\fG_\gamma(\vw)}$.
\end{cor}
\begin{proof}
We have
\begin{align*}
       \frac{\mu}{2}\norm{\vw-\vw^*}^2 
    \leq & \inner{\fG_\gamma(\vw)}{\vw-\vw^*} - \frac{1}{2\ell}\norm{\fG_\gamma(\vw)}^2 \\
    \leq & \inner{\fG_\gamma(\vw)}{\vw-\vw^*} \\
    \leq & \norm{\fG_\gamma(\vw)}\norm{\vw-\vw^*},
\end{align*}
where the first inequality use Lemma \ref{lem:prox2} and the last one is based on Cauchy-Schwarz inequality.
Then we obtain the desired result.
\end{proof}

\section{Some Results of Concave Maximizer}\label{app:inner}

In this section, we present some results of concave maximizer Algorithm \ref{alg:inner}. The analysis of SREDA is based on the following two auxiliary quantities:
\begin{align*}
    \Delta_k = \BE\left[\norm{\vv_k-\nabla_\vx f(\vx_k,\vy_k)}^2+\norm{\vu_k-\nabla_\vy f(\vx_k,\vy_k)}^2\right] \text{~~and~~}
    \delta_k=\BE\norm{\fG_{\lambda,\vy}(\vx_k,\vy_k)}^2. 
\end{align*}
The main target is to prove both $\Delta_k$ and $\delta_k$ can be bounded by $\fO(\kappa^{-2}\eps^2)$.

Using the notations of Algorithm \ref{alg:SREDA} and \ref{alg:inner},
we denote $\ty_k^*=\argmin_{\vy\in\fY} g_k(\vy)$ and $\hu_{k,t}=-\tu_{k,t}$.
It is obvious that $g_k(\cdot)$ is $\mu$-strongly convex and has $\ell$-Lipschitz gradient.
The update rule of $\vx_k$ in Algorithm \ref{alg:SREDA} means for any $k\geq 0$, we have
\begin{align*}
    \norm{\vx_{k+1}-\vx_k}^2 \leq \eps_\vx^2, 
\end{align*}
where $\eps_\vx^2$ is defined as $\frac{1}{25}\kappa^{-2}\eps^2$.
We also denote the gradient mapping with respect to $\vy$ as
\begin{align*}
    \tilde\fG_{\lambda,k}(\vy)\triangleq \frac{\vy-\Pi_{\fY }(\vy-\lambda\nabla g_k(\vy))}{\lambda} = \fG_{\lambda}(\vx_{k+1},\vy).
\end{align*}

We first introduce some lemmas for our iteration and gradient mapping.
\begin{lem}[Lemma 1 of \cite{li2018simple}]\label{lem:prox-compositive}
Let $\vy^+ := \Pi_{\fY}(\vy-\lambda \vu)$, then for all $\vz$, we have
\begin{align*}
g_k(\vy^+)
\leq g_k(\vz) + \inner{\nabla g_k(\vy)-\vu}{\vy^+-\vz}  
    -\frac{\inner{\vy^+-\vy}{\vy^+-\vz}}{\lambda}
    +\frac{\ell}{2}\norm{\vy^+-\vy}^2+\frac{\ell}{2}\norm{\vz-\vy}^2.
\end{align*}
\end{lem}

\begin{lem}[Lemma 2 of \cite{li2018simple}]\label{lem:prox-compositive2} Let $\vy^+ := \Pi_{\fY}(\vy-\lambda \vu)$ and $\overline{\vy_{k,t}} := \Pi_{\fY}\big(\vy-\lambda \nabla g_k(\vy)\big)$, then we have
$\inner{\nabla g_k(\vy)-\vu}{\vy^+-\overline{\vy_{k,t}}} \leq \lambda\norm{\nabla g_k(\vy)-\vu}^2$.
\end{lem}

\begin{lem}\label{lem:diff-grad-mapping}
For Algorithm \ref{alg:SREDA} and any $\vy$, we have
$\norm{\fG_{\lambda,\vy} (\vx_{k+1},\vy)-\fG_{\lambda,\vy} (\vx_{k},\vy)}^2 \leq \ell^2\eps_\vx^2$.
\end{lem}
\begin{proof}
Using Lemma \ref{lem:nonexpansive} and smoothness of $f$, we have
\begin{align*}
  & \norm{\fG_{\lambda,\vy} (\vx_{k+1},\vy_{k})-\fG_{\lambda,\vy} (\vx_{k},\vy_{k})}^2 \\
= & \norm{\frac{\vy_k-\Pi_{\fY}(\vy_k-\lambda\nabla _\vy f(\vx_{k+1},\vy_k))}{\lambda} - \frac{\vy_k-\Pi_{\fY}(\vy_k-\lambda\nabla_\vy f(\vx_{k},\vy_k))}{\lambda}}^2 \\
\leq & \frac{1}{\lambda^2}\norm{\Pi_{\fY}(\vy_k-\lambda\nabla_\vy f(\vx_{k+1},\vy_k)) - \Pi_{\fY}(\vy_k-\lambda\nabla_\vy f(\vx_{k},\vy_k))}^2 \\
\leq & \norm{\nabla_\vy f(\vx_{k+1},\vy_k)-\nabla_\vy f(\vx_{k},\vy_k)}^2 \leq \ell^2\eps_\vx^2.
\end{align*}
\end{proof}

\begin{lem}\label{lem:PL}
Let $\vy^+=\Pi_\fY(\vy-\lambda\vu)$ and $\lambda<1/\ell$, then we have
\begin{align*}
   g_k(\vy^+)-g_k(\vy^*) 
\leq  \frac{1}{\mu}\left(\|{\tilde\fG_{\lambda,k}(\vy)}\|_2^2 + \norm{\nabla g_k(\vy)-\vu}^2\right)
\end{align*}
for any $\vy^*\in\fY$.
\end{lem}
\begin{proof}
Let $Q(\vz)=f(\vy)+\inner{\vu}{\vz-\vy}+\frac{1}{2\lambda}\norm{\vz-\vy}^2+r(\vz)$. We have $\vy^+=\argmin_\vz Q(\vz)$ and
\begin{align*}
    \inner{\vu + \lambda^{-1}(\vy^+-\vy) + \vxi}{\vy^+-\vy^*} = \inner{\nabla Q(\vy^+)}{\vy^*-\vy^+} \geq 0,
\end{align*}
for any $\vy^*\in\fY$.
Then
\begin{align*}
   & \inner{\nabla g_k(\vy^+)}{\vy^*-\vy^+} \\
=  & \inner{\nabla g_k(\vy^+) - \nabla g_k(\vy)}{\vy^*-\vy^+} + \inner{\nabla g_k(\vy)}{\vy^*-\vy^+} \\
= & \inner{\nabla g_k(\vy^+) - \nabla g_k(\vy)}{\vy^*-\vy^+} + \inner{\nabla g_k(\vy)-\vu+ \lambda^{-1}(\vy-\vy^+)}{\vy^*-\vy^+} \\
= & \inner{\nabla{\tilde g_k}(\vy^+) - \nabla {\tilde g_k}(\vy)}{\vy^*-\vy^+} + \inner{\nabla g_k(\vy)-\vu}{\vy^*-\vy^+} \\
\geq & -\max(\ell,\lambda^{-1})\norm{\vy^+-\vy}\norm{\vy^*-\vy^+} + \inner{\nabla g_k(\vy)-\vu}{\vy^*-\vy^+} \\
\geq & -\max(\ell,\lambda^{-1})\norm{\vy^+-\vy}\norm{\vy^*-\vy^+} - \norm{\nabla g_k(\vy)-\vu}\norm{\vy^*-\vy^+},
\end{align*}
where $\tilde g_k(\vy) = g_k(\vy)-\frac{1}{2\lambda}\norm{\vy}^2$.
The first inequality is due to $\tilde g_k$ is at most $\max(\ell,\lambda^{-1})$-smooth, that is
\begin{align*}
    -\lambda^{-1}\mI \preceq (\mu-\lambda^{-1})\mI 
    \preceq \nabla^2 \tilde g_k(\vy)
    \preceq (\ell-\lambda^{-1})\mI \preceq \ell\mI.
\end{align*}
Consequently, we have
\begin{align*}
 & -\max(\ell,\lambda^{-1})\norm{\vy^+-\vy}\norm{\vy^*-\vy^+} \\
\leq & \inner{\nabla g_k(\vy^+)}{\vy^*-\vy^+} + \norm{\nabla g_k(\vy)-\vu}\norm{\vy^*-\vy^+}    \\
\leq & g_k(\vy^*)-g_k(\vy^+) -\frac{\mu}{2}\norm{\vy^*-\vy^+}^2 + \norm{\nabla g_k(\vy)-\vu}\norm{\vy^*-\vy^+} 
\end{align*}
which implies
\begin{align*}
 & g_k(\vy^*)-g_k(\vy^+) \\
\geq & \frac{\mu}{2}\norm{\vy^*-\vy^+}^2  -\max(\ell,\lambda^{-1})\norm{\vy^+-\vy}\norm{\vy^*-\vy^+} - \norm{\nabla g_k(\vy)-\vu}\norm{\vy^*-\vy^+}    \\
\geq & \inf_\vy\left\{\frac{\mu}{2}\norm{\vz-\vy^+}^2  -\left(\max(\ell,\lambda^{-1})\norm{\vy^+-\vy}
+\norm{\nabla g_k(\vy)-\vu}\right)\norm{\vz-\vy^+}\right\} \\
= & -\frac{1}{2\mu}\left(\max(\ell,\lambda^{-1})\norm{\vy^+-\vy}
+\norm{\nabla g_k(\vy)-\vu}\right)^2 \\
\geq & -\frac{1}{\mu}\left(\max(\ell,\lambda^{-1})^2\norm{\vy^+-\vy}^2 + \norm{\nabla g_k(\vy)-\vu}^2\right).
\end{align*}
Considering that $\eta\leq 1/\ell$, we can obtain the desired result by rearranging above inequality.
\end{proof}

We can now present the key lemma for the concave maximizer, which upper bounds the magnitude of gradient mapping after one epoch iterations on $\vy$.
\begin{lem}\label{lem:SR}
For Algorithm \ref{alg:inner} with $\lambda=\frac{1}{8\ell}$, we have 
\begin{align*}
  & \BE\|\tilde\fG_{\lambda,k}(\ty_{k,s_k})\|_2^2  \\
\leq & \frac{64\ell}{m\mu}\BE\|\tilde\fG_{\lambda,k}(\ty_{k,0})\|_2^2 + \left(\frac{64\ell}{m\mu}+8\right)\BE\norm{\nabla g_k(\ty_{k,0})-\tu_{k,0}}^2 + 8\ell^2\BE\norm{\ty_{k,1}-\ty_{k,0}}^2.
\end{align*}
\end{lem}
\begin{proof}
We define $\overline{\vy_{k,t}} := \Pi_{\fY}\big(\ty_{k,t-1}-\lambda \nabla g_k(\ty_{k,t-1})\big)$.
The procedure of Algorithm \ref{alg:inner} means $\ty_{k,t} := \Pi_{\fY}(\ty_{k,t-1}-\lambda \hu_{k,t-1})$.
Using Lemma \ref{lem:prox-compositive} by letting $\vy^+=\ty_{k,t}, \vy=\ty_{k,t-1}, \vu=\hu_{k,t-1}$ and $\vz=\overline{\vy_{k,t}}$ , we have
\begin{align}
\begin{split}
g_k(\ty_{k,t})\leq & g_k(\overline{\vy_{k,t}}) + \inner{\nabla g_k(\ty_{k,t-1})-\tu_{k,t-1}}{\ty_{k,t}-\overline{\vy_{k,t}}} \\
& -\frac{1}{\lambda}\inner{\ty_{k,t}-\ty_{k,t-1}}{\ty_{k,t}-\overline{\vy_{k,t}}} +\frac{\ell}{2}\norm{\ty_{k,t}-\ty_{k,t-1}}^2+\frac{\ell}{2}\norm{\overline{\vy_{k,t}}-\ty_{k,t-1}}^2.
\end{split}\label{ieq:prox1}
\end{align}
Using Lemma \ref{lem:prox-compositive} again by letting $\vy^+=\overline{\vy_{k,t}}, \vy=\ty_{k,t-1}, \vu=\nabla g_k(\ty_{k,t-1})$ and $\vz=\vy=\ty_{k,t-1}$, we have
\begin{align}
\begin{split}
g_k(\overline{\vy_{k,t}})
\leq & g_k(\ty_{k,t-1}) -\frac{1}{\lambda}\inner{\overline{\vy_{k,t}}-\ty_{k,t-1}}{\overline{\vy_{k,t}}-\ty_{k,t-1}}
    +\frac{\ell}{2}\norm{\overline{\vy_{k,t}}-\ty_{k,t-1}}^2 \\
= & g_k(\vy_{t-1}) - \left(\frac{1}{\lambda}-\frac{\ell}{2}\right)\norm{\overline{\vy_{k,t}}-\ty_{k,t-1}}^2.
\end{split}\label{ieq:prox2}
\end{align}
Sum over inequalities (\ref{ieq:prox1}) and (\ref{ieq:prox2}), we have
\begin{align*}
     & g_k(\ty_{k,t}) \\
\leq & g_k(\ty_{k,t-1}) + \frac{\ell}{2}\norm{\ty_{k,t}-\ty_{k,t-1}}^2 -\left(\frac{1}{\lambda}-\ell\right)\norm{\overline{\vy_{k,t}}-\ty_{k,t-1}}^2 \\
& + \inner{\nabla g_k(\ty_{k,t-1})-\hu_{k,t-1}}{\ty_{k,t}-\overline{\vy_{k,t}}} -\frac{1}{\lambda}\inner{\ty_{k,t}-\ty_{k,t-1}}{\ty_{k,t}-\overline{\vy_{k,t}}}^2 \\
=& g_k(\ty_{k,t-1}) + \frac{\ell}{2}\norm{\ty_{k,t}-\ty_{k,t-1}}^2 -\left(\frac{1}{\lambda}-\ell\right)\norm{\overline{\vy_{k,t}}-\ty_{k,t-1}}^2 \\
 &  + \inner{\nabla g_k(\ty_{k,t-1})-\hu_{k,t-1}}{\ty_{k,t}-\overline{\vy_{k,t}}} \\ 
 & -\frac{1}{2\lambda}\left(\norm{\ty_{k,t}-\ty_{k,t-1}}^2+\norm{\ty_{k,t}-\overline{\vy_{k,t}}}^2-\norm{\overline{\vy_{k,t}}-\ty_{k,t-1}}^2\right) \\
=& g_k(\ty_{k,t-1}) -\left(\frac{1}{2\lambda}-\frac{\ell}{2}\right)\norm{\ty_{k,t}-\ty_{k,t-1}}^2 -\left(\frac{1}{2\lambda}-\ell\right)\norm{\overline{\vy_{k,t}}-\ty_{k,t-1}}^2 \\
 &  + \inner{\nabla g_k(\ty_{k,t-1})-\hu_{k,t-1}}{\ty_{k,t}-\overline{\vy_{k,t}}}  -\frac{1}{2\lambda}\norm{\ty_{k,t}-\overline{\vy_{k,t}}}^2 \\
\leq & g_k(\ty_{k,t-1}) -\left(\frac{1}{2\lambda}-\frac{\ell}{2}\right)\norm{\ty_{k,t}-\ty_{k,t-1}}^2 -\left(\frac{1}{2\lambda}-\ell\right)\norm{\overline{\vy_{k,t}}-\ty_{k,t-1}}^2 \\
& + \inner{\nabla g_k(\ty_{k,t-1})-\hu_{k,t-1}}{\ty_{k,t}-\overline{\vy_{k,t}}} -\frac{1}{8\lambda}\norm{\ty_{k,t}-\ty_{k,t-1}}^2+\frac{1}{6\lambda}\norm{\overline{\vy_{k,t}}-\ty_{k,t-1}}^2 \\
=& g_k(\ty_{k,t-1}) -\left(\frac{5}{8\lambda}-\frac{\ell}{2}\right)\norm{\ty_{k,t}-\ty_{k,t-1}}^2 -\left(\frac{1}{3\lambda}-\ell\right)\norm{\overline{\vy_{k,t}}-\ty_{k,t-1}}^2 \\
& + \inner{\nabla g_k(\ty_{k,t-1})-\hu_{k,t-1}}{\ty_{k,t}-\overline{\vy_{k,t}}} \\
\leq & g_k(\ty_{k,t-1}) -\left(\frac{5}{8\lambda}-\frac{\ell}{2}\right)\norm{\ty_{k,t}-\ty_{k,t-1}}^2 -\left(\frac{1}{3\lambda}-\ell\right)\norm{\overline{\vy_{k,t}}-\ty_{k,t-1}}^2 \\
& + \lambda\norm{\nabla g_k(\ty_{k,t-1})-\hu_{k,t-1}}^2,  
\end{align*}
where the second inequality uses Young's inequality as follows 
\begin{align*}
 \norm{\ty_{k,t}-\ty_{k,t-1}}^2\leq \left(1+\alpha^{-1}\right)\norm{\overline{\vy_{k,t}}-\ty_{k,t-1}}^2 +(1+\alpha)\norm{\ty_{k,t}-\overline{\vy_{k,t}}}^2  \text{~~with~} \alpha=3,
\end{align*}
and the last inequality holds due to Lemma \ref{lem:prox-compositive2}. Take the expectation on above result, we have
\begin{align}
\begin{split}
    & \BE [g_k(\ty_{k,t+1})] \\
\le & \BE \Bigg[g_k(\ty_{k,t}) - \left(\frac{1}{2\lambda} - \frac{\ell}{2}\right)\norm{\ty_{k,t+1}-\ty_{k,t}}^2 - \left(\frac{1}{3\lambda}-\ell\right)\norm{\overline{\vy_{k,t+1}} - \ty_{k,t}}^2 \\
    & \quad\quad + \lambda\norm{\nabla g_k(\ty_{k,t})-\hu_{k,t}}^2 \Bigg] \\
\le & \BE \Bigg[g_k(\ty_{k,t}) - \left(\frac{1}{2\lambda} - \frac{\ell}{2}\right)\norm{\ty_{k,t+1}-\ty_{k,t}}^2 - \left(\frac{1}{3\lambda}-\ell\right)\norm{\overline{\vy_{k,t+1}} - \ty_{k,t}}^2 \\
& \quad\quad+ \lambda\left(\norm{\nabla g_k(\ty_{k,0})-\tu_{k,0}}^2+\frac{\ell^2}{S_2}\sum_{i=0}^{t-1} \norm{\ty_{k,i+1}-\ty_{k,i}}^2\right) \Bigg],
\end{split}\label{ieq:bound-exp-phi}
\end{align}
where the second inequality is based on Lemma \ref{lem:martingale}. 
Summing over (\ref{ieq:bound-exp-phi}) with $t$ from $1$ to $m$ and relax the upper bound of $i$ to $m$, we obtain
\begin{align*}
    & \BE [g_k(\ty_{k,m})] \\
\leq & \BE [g_k(\ty_{k,1})] - \sum_{i=1}^{m}\left(\frac{1}{2\lambda} - \frac{\ell}{2}
    - \frac{\lambda \ell^2m}{S_2}\right) \BE\norm{\ty_{k,i+1}-\ty_{k,i}}^2 \\
 & - \left(\frac{1}{3\lambda}-\ell\right)\sum_{i=1}^{m}\BE\norm{\overline{\vy_{k,i+1}} - \ty_{k,i}}^2 
  + m\lambda\norm{\nabla g_k(\ty_{k,0})-\hu_{k,0}}^2 + \frac{\lambda \ell^2m}{S_2} \BE\norm{\ty_{k,1}-\ty_{k,0}}^2 
\end{align*}
Consider that $\lambda = \frac{1}{8\ell}$, we further obtain that
{\begin{align*}
    & \BE[g_k(\ty_{k,m})] \\
\leq & \BE[g_k(\ty_{k,1})] - 3\ell\sum_{i=1}^{m}\BE\norm{\ty_{k,i+1}-\ty_{k,i}}^2 - \ell\lambda^2\sum_{i=1}^{m}\BE\|\tilde\fG_{\lambda,k}(\ty_{k,i})\|_2^2 \\
& + m\lambda\norm{\nabla g_k(\ty_{k,1})-\hu_{k,0}}^2 + \frac{\lambda \ell^2m}{S_2} \BE\norm{\ty_{k,1}-\ty_{k,0}}^2 \\
\leq & \BE[g_k(\ty_{k,1})] - \ell\lambda^2\sum_{i=1}^{m}\BE\|\tilde\fG_{\lambda,k}(\ty_{k,i})\|_2^2 + m\lambda\norm{\nabla g_k(\ty_{k,1})-\hu_{k,0}}^2 + \frac{\lambda \ell^2m}{S_2} \BE\norm{\ty_{k,1}-\ty_{k,0}}^2
\end{align*}}
Let $\ty^*_k=\argmin_{\vy\in\fY} g_k(\vy)$, then 
the above inequality further implies that
\begin{align*}
    \sum_{i=1}^{m}\BE\|\tilde\fG_{\lambda,k}(\ty_{k,i})\|_2^2  
\leq 64\ell\left(\BE[g_k(\ty_{k,1}) - g_k(\ty^*_{k})]\right) + 8m\norm{\nabla g_k(\ty_{k,0})-\hu_{k,0}}^2.
\end{align*}
Since $\vy_{k+1}=\ty_{k,s_k}$ and $s_k$ is sampled from $\{1,\dots,m\}$, we have
{\small\begin{align*}
  & \BE\|\tilde\fG_{\lambda,k}(\ty_{k,s_k})\|_2^2 \\
= & \frac{1}{m}\sum_{i=1}^{m}\BE\|\tilde\fG_{\lambda,k}(\ty_{k,i})\|_2^2 \\
\leq & \frac{64\ell}{m}\left(\BE[g_k(\ty_{k,1}) - g_k(\ty_k^*)]\right) + 8\BE\norm{\nabla g_k(\ty_{k,0})-\tu_{k,0}}^2 + \frac{8\ell^2}{S_2} \BE\norm{\ty_{k,1}-\ty_{k,0}}^2 \\
\leq & \frac{64\ell}{m\mu}\left(\BE\|\tilde\fG_{\lambda,k}(\ty_{k,0})\|_2^2+\norm{\nabla g_k(\ty_{k,0})-\tu_{k,0}}^2\right) 
 + 8\BE\norm{\nabla g_k(\ty_{k,0})-\tu_{k,0}}^2 + 8\ell^2\BE\norm{\ty_{k,1}-\ty_{k,0}}^2 \\
= & \frac{64\ell}{m\mu}\BE\|\tilde\fG_{\lambda,k}(\ty_{k,0})\|_2^2 + \left(\frac{64\ell}{m\mu}+8\right)\BE\norm{\nabla g_k(\ty_{k,0})-\tu_{k,0}}^2 + 8\ell^2\BE\norm{\ty_{k,1}-\ty_{k,0}}^2,
\end{align*}}
where the first inequality is based on Lemma \ref{lem:PL} and the second one is due to assumption $S_2\geq m$.
\end{proof}

We bound the progress of $\ty_{k,t}$ by the following lemmas.
\begin{lem}\label{lem:progress-decay}
For Algorithm \ref{alg:inner} with $\lambda\leq 2/(\mu+\ell)$, we have
\begin{align*}
    \BE\norm{\ty_{k,t+1}-\ty_{k,t}}^2
    \leq\left(1 -\frac{2\lambda\mu \ell}{\mu+\ell}\right)\norm{\ty_{k,t}-\ty_{k,t-1}}^2
    \quad \text{for any~~} t \geq 1.
\end{align*}
\end{lem}
\begin{proof}
Using the notations of Algorithm \ref{alg:inner}, we define $g_{k,t}(\vy) = -\frac{1}{S_2}\sum_{i=1}^{S_2} \nabla_\vy F(\vx_{k+1},\vy;\vxi_{t,i})$.
Then, we have
\begin{align*}
  & \BE\norm{\ty_{k,t+1}-\ty_{k,t}}^2 \\
= & \BE\norm{\Pi_{\fY}(\ty_{k,t}-\lambda \hu_{k,t}) - \Pi_{\fY}(\ty_{k,t-1}-\lambda \hu_{k,t-1})}^2 \\
\leq & \BE\norm{(\ty_{k,t}-\lambda \hu_{k,t}) - (\ty_{k,t-1}-\lambda \hu_{k,t-1})}^2 \\
= & \norm{\ty_{k,t}-\ty_{k,t-1}}^2 -2\lambda\BE\inner{\ty_{k,t}-\ty_{k,t-1}}{\hu_{k,t}-\hu_{k,t-1}}+ \lambda^2\BE\norm{\hu_{k,t} - \hu_{k,t-1}}^2 \\
= & \norm{\ty_{k,t}-\ty_{k,t-1}}^2 -2\lambda\BE\inner{\ty_{k,t}-\ty_{k,t-1}}{\nabla g_{k,t-1}(\ty_{k,t})-\nabla g_{k,t-1}(\ty_{k,t-1})} \\
& + \lambda^2\BE\norm{\nabla g_{k,t-1}(\ty_{k,t})-\nabla g_{k,t-1}(\ty_{k,t-1})}^2 \\
\leq & \norm{\ty_{k,t}-\ty_{k,t-1}}^2 -\frac{2\lambda\mu \ell}{\mu+\ell}\BE\norm{\ty_{k,t}-\ty_{k,t-1}}^2  \\ 
& + \left(\lambda^2-\frac{2\lambda}{\mu+\ell}\right)\BE\norm{\nabla g_{k,t-1}(\ty_{k,t})-\nabla g_{k,t-1}(\ty_{k,t-1})}^2 \\
\leq & \left(1 -\frac{2\lambda\mu \ell}{\mu+\ell}\right)\norm{\ty_{k,t}-\ty_{k,t-1}}^2,
\end{align*}
where the first inequality is based on Lemma \ref{lem:nonexpansive}, the second one comes from inequality (\ref{ieq:grad-smooth0}) of Lemma \ref{lem:convex} and the last one is due to $\lambda<2/(\mu+\ell)$. 
\end{proof}

Note that our algorithm estimate $\fG_{\lambda,\vy}(\vx_{k+1},\ty_{k,0})$ by $\frac{1}{\lambda}(\ty_{k,0}- \ty_{k,1})$, whose norm can be bounded as follows:
\begin{lem}\label{lem:diff-y0-y1}
For Algorithm \ref{alg:inner}, we have
\begin{align*}
    \norm{\frac{\ty_{k,0}-\ty_{k,1}}{\lambda}}^2 
\leq 3(\Delta_{k}+2\ell^2\eps_\vx^2+\delta_k).
\end{align*}
\end{lem}
\begin{proof}
The fact $\norm{\va+\vb+\vc}^2 \leq 3\left(\norm{\va}^2+\norm{\vb}^2+\norm{\vc}^2\right)$ means
{\small\begin{align*}
    & \norm{\frac{\ty_{k,0}-\ty_{k,1}}{\lambda}}^2 \\
= & \norm{\frac{\ty_{k,0}-\ty_{k,1}}{\lambda} - \fG_{\lambda,\vy}(\vx_{k+1},\vy_k) + \fG_{\lambda,\vy}(\vx_{k+1},\vy_k) -  \fG_{\lambda,\vy}(\vx_k,\vy_k) + \fG_{\lambda,\vy}(\vx_k,\vy_k)}^2 \\ 
\leq & 3\norm{\frac{\ty_{k,0}-\ty_{k,1}}{\lambda} - \fG_{\lambda,\vy}(\vx_{k+1},\vy_k)}^2 + 3\norm{\fG_{\lambda,\vy}(\vx_{k+1},\vy_k) -  \fG_{\lambda,\vy}(\vx_k,\vy_k)}^2 + 3\norm{\fG_{\lambda,\vy}(\vx_k,\vy_k)}^2. 
\end{align*}}
We use Lemma \ref{lem:nonexpansive} and Lemma \ref{lem:diff-grad-mapping} to bound the first and the second term respectively, that is
\begin{align*}
    & \norm{\frac{\ty_{k,0}-\ty_{k,1}}{\lambda} - \fG_{\lambda,\vy}(\vx_{k+1},\vy_k)}^2 \\
=& \norm{\frac{1}{\lambda}(\vy_k-\Pi_{\fY}(\vy_k-\lambda\tu_{k,0})) - \frac{1}{\lambda}(\vy_k-\Pi_{\fY}(\vy_k-\lambda\nabla_\vy f(\vx_{k+1},\vy_k)))}^2 \\
=& \frac{1}{\lambda^2}\norm{\Pi_{\fY}(\vy_k-\lambda\tu_{k,0}) - \Pi_{\fY}(\vy_k-\lambda\nabla_\vy f(\vx_{k+1},\vy_k))}^2 \\
\leq & \norm{\tu_{k,0}-\nabla_\vy f(\vx_{k+1},\vy_k)}^2 
=  \tDelta_{k,0},
\end{align*}
and
\begin{align*}
    \norm{\fG_{\lambda,\vy}(\vx_{k+1},\vy_k) -  \fG_{\lambda,\vy}(\vx_k,\vy_k)}^2 \leq \ell^2\eps_\vx^2.
\end{align*}
The third term is $\norm{\fG_{\lambda,\vy}(\vx_k,\vy_k)}^2=\delta_k$ because of the definition. Hence, we have
\begin{align*}
      \norm{\frac{\ty_{k,0}-\ty_{k,1}}{\lambda}}^2 
\leq & 3(\tDelta_{k,0}+\ell^2\eps_\vx^2+\delta_k) \\
\leq & 3(\Delta_{k}+\frac{\ell^2\eps_\vx^2}{S_2}+\ell^2\eps_\vx^2+\delta_k) \\
\leq & 3(\Delta_{k}+2\ell^2\eps_\vx^2+\delta_k).
\end{align*}
\end{proof}

Then we can establish the recursive relationship of $\Delta_k$ and $\delta_k$. 
\begin{lem}\label{lem:recursion}
For Algorithm \ref{alg:SREDA} and \ref{alg:inner} with $\lambda=\frac{1}{8\ell}$, for any $k= k_0+1, k_0+2, \dots, k_0+q-1$, we have
\begin{align*}
    & \Delta_k
\leq  \Delta_{k_0} + \frac{3}{64S_2(1-\alpha)} \sum_{i=k_0}^{k-1}\left(\Delta_{i} + \delta_{i} + 2\ell^2\eps_\vx^2\right) + \frac{(k-k_0)\ell^2\eps_\vx^2}{S_2}, \\
    & \delta_{k+1}  
\leq \left(\frac{128\ell}{m\mu}+\frac{3}{8}\right)\delta_k + \left(\frac{64\ell}{m\mu}+\frac{67}{8}\right)\Delta_k +  \left(\frac{192\ell}{m\mu}+\frac{35}{4}\right)\ell^2\eps_\vx^2,
\end{align*}
where 
\begin{align*}
    \alpha=1 -\frac{2\lambda\mu \ell}{\mu+\ell}.
\end{align*}
\end{lem}
\begin{proof}
We define 
\begin{align*}
    \tDelta_{k,t}=\BE\left(\norm{\tv_{k,t}-\nabla_\vx f(\tx_{k,t},\ty_{k,t})}^2+\norm{\tu_{k,t}-\nabla_\vy f(\tx_{k,t},\ty_{k,t})}^2\right),
\end{align*}
then we have
{\small\begin{align}\label{ieq:tDelta}
\begin{split}
  & \tDelta_{k,0} \\  
= & \BE\left(\norm{\tv_{k,0}-\nabla_\vx f(\tx_{k,0},\ty_{k,0})}^2+\norm{\tu_{k,0}-\nabla_\vy f(\tx_{k,0},\ty_{k,0})}^2 \right) \\
\leq & \BE\left(\norm{\vv_k-\nabla_\vx f(\vx_{k},\vy_{k})}^2+\norm{\vu_{k}-\nabla_\vy f(\vx_{k},\vy_{k})}^2 \right)
    + \frac{\ell^2}{S_2}\BE\left(\norm{\tx_{k,0}-\vx_k}^2 + \norm{\ty_{k,0}-\vy_k}^2 \right) \\
= & \Delta_k +  \frac{\ell^2}{S_2}\BE\left(\norm{\vx_{k+1}-\vx_k}^2 + \norm{\vy_{k}-\vy_k}^2 \right) \\
\leq & \Delta_k + \frac{\ell^2\eps_\vx^2}{S_2},
\end{split}
\end{align}}
where the first inequality comes from Lemma \ref{lem:martingale} by letting $\fB(\cdot)=\nabla f(\cdot)$ and $\fV_k=(\vv_k,\vu_k)$.

Now for any $k\geq 1$, we have
\begin{align*}
      \Delta_k 
=   & \BE\left(\norm{\vv_{k}-\nabla_\vx f(\vx_{k},\vy_{k})}^2 + \norm{\vu_{k}-\nabla_\vy f(\vx_{k},\vy_{k})}^2\right) \\
=   & \tDelta_{k-1,s_{k-1}+1} \\
    \leq   &  \BE\left(\norm{\tv_{k-1,0}-\nabla_\vy f(\tx_{k-1,0},\ty_{k-1,0})}^2
    + \norm{\tu_{k-1,0}-\nabla_\vy f(\tx_{k-1,0},\ty_{k-1,0})}^2\right) \\
    &   + \frac{\ell^2}{S_2}\sum_{t=0}^{s_{k-1}}\left(\norm{\tx_{k-1,t+1}-\tx_{k-1,t}}^2 + \norm{\ty_{k-1,t+1}-\ty_{k-1,t}}^2\right) \\
= & \tDelta_{k-1,0} + \frac{\ell^2}{S_2} \sum_{t=0}^{s_{k-1}}\norm{\ty_{k-1,t+1}-\ty_{k-1,t}}^2  \\
\leq & \tDelta_{k-1,0} + \frac{\ell^2}{S_2}\sum_{t=0}^{s_k-1}\alpha^t\norm{\ty_{k-1,1}-\ty_{k-1,0}}^2 \\
\leq & \tDelta_{k-1,0} + \frac{\ell^2\lambda^2}{S_2(1-\alpha)} \norm{\frac{\ty_{k-1,1}-\ty_{k-1,0}}{\lambda}}^2,
\end{align*}
where the first inequality is due to Lemma \ref{lem:martingale}; the second inequality comes from Lemma \ref{lem:progress-decay} and the third one is due to basic property of geometric sequence.

Combining above results and Lemma \ref{lem:diff-y0-y1}, we have
\begin{align*}
      \Delta_k
\leq & \tDelta_{k-1,0} + \frac{\ell^2\lambda^2}{S_2(1-\alpha)} \norm{\frac{\ty_{k-1,1}-\ty_{k-1,0}}{\lambda}}^2 \\
\leq & \Delta_{k-1} + \frac{\ell^2\eps_\vx^2}{S_2}
        + \frac{3\ell^2\lambda^2}{S_2(1-\alpha)} \left(\Delta_{k-1} + \delta_{k-1} + 2\ell^2\eps_\vx^2\right).
\end{align*}

Summing over the above inequality from $k_0$ to $k$, we can prove the first part of this theorem as follows 
\begin{align*}
  \Delta_k
\leq & \Delta_{k-1} + \frac{3\ell^2\lambda^2}{S_2(1-\alpha)} \left(\Delta_{k-1} + \delta_{k-1} + 2\ell^2\eps_\vx^2\right)
      + \frac{\ell^2\eps_\vx^2}{S_2} \\
      \leq & \Delta_{k_0} + \frac{3}{64S_2(1-\alpha)} \sum_{i=0}^{k-1}\left(\Delta_{i} + \delta_{i} + 2\ell^2\eps_\vx^2\right)
      + \frac{(k-k_0)\ell^2\eps_\vx^2}{S_2}.
\end{align*}
Recall that $\vy_{k+1}=\ty_{k,s_k}$ and the definition of $g_k$, we achieve the second part of this lemma:
\begin{align*}
  &   \delta_{k+1} \\
= & \BE\norm{\fG_{\lambda,\vy} (\vx_{k+1},\vy_{k+1})}^2 \\
\leq &  \frac{64\ell}{m\mu}\BE\norm{\fG_{\lambda,\vy} (\vx_{k+1},\vy_{k})}^2
    + \left(8+\frac{64\ell}{m\mu}\right)\BE\norm{\nabla f(\tx_{k,0},\ty_{k,0})-\tu_{k,0}}^2 
    + 8\ell^2\BE\norm{\ty_{k,1}-\ty_{k,0}}^2 \\
\leq & \frac{128\ell}{m\mu}
    \BE\left(\norm{\fG_{\lambda,\vy} (\vx_{k+1},\vy_{k})-\fG_{\lambda,\vy} (\vx_{k},\vy_{k})}^2+\norm{\fG_{\lambda,\vy} (\vx_{k},\vy_{k})}^2\right)
    \\
& +\left(8+\frac{64\ell}{m\mu}\right)\tDelta_{k,0} + \frac{3}{8}(\Delta_k+\delta_k+2\ell^2\eps_\vx^2) \\
\leq & \frac{128\ell}{m\mu} \left(\ell^2\eps_\vx^2 + \delta_k \right) 
    +  \left(8+\frac{64\ell}{m\mu}\right)\tDelta_{k,0} + \frac{3}{8}(\Delta_k+\delta_k+2\ell^2\eps_\vx^2)  \\
\leq & \frac{128\ell}{m\mu} \left(\ell^2\eps_\vx^2 + \delta_k \right) +  \left(8+\frac{64\ell}{m\mu}\right)\left(\Delta_k + \frac{\ell^2\eps_\vx^2}{S_2}\right) + \frac{3}{8}(\Delta_k+\delta_k+2\ell^2\eps_\vx^2) \\
= & \left(\frac{128\ell}{m\mu}+\frac{3}{8}\right)\delta_k + \left(\frac{64\ell}{m\mu}+\frac{67}{8}\right)\Delta_k +  \left(\frac{192\ell}{m\mu}+\frac{35}{4}\right)\ell^2\eps_\vx^2,
\end{align*}
where the first inequality is according to Lemma \ref{lem:SR},
the second inequality is based on Young's inequality and Lemma \ref{lem:diff-y0-y1}, the third inequality is due to
Lemma \ref{lem:diff-grad-mapping} and the other steps are based on definitions.
\end{proof}

Now we can provide the upper bound of $\Delta_k$ and $\delta_k$.
\begin{cor}
    For Algorithm \ref{alg:SREDA} and Algorithm \ref{alg:inner} with
    \begin{align*}
        & \eta_k=\min\left(\frac{\eps}{5\kappa\ell\norm{\vv_k}},\frac{1}{10\kappa\ell}\right), \lambda=\frac{1}{8\ell},  
        m=\lceil1024\kappa\rceil, 
        q=\left\lceil\eps^{-1}\right\rceil, \\ 
        & S_1=\left\lceil\frac{2250}{19}\sigma^2\kappa^{-2}\eps^2\right\rceil, S_2=\left\lceil\frac{3687}{76}\kappa q\right\rceil, \text{~~and~~} \delta_0\leq\kappa^{-2}\eps^2,
    \end{align*}
    Then we have $\Delta_k\leq\frac{19}{1125}\kappa^{-2}\eps^2$ and $\delta_k\leq\kappa^{-2}\eps^2$ for all $k\geq 0$. 
\end{cor}
\begin{proof}
Firstly, we let $k_0$ be the number of round satisfies ${\rm mod}(k_0, q)=0$ in Algorithm \ref{alg:SREDA} and $\alpha$ be the one defined in Lemma \ref{lem:SR}, such that
\begin{align}
    \alpha 
= 1 - \frac{2\lambda\mu\ell}{\mu+\ell}
\leq 1-\frac{1}{8\kappa}. \label{ieq:alpha}
\end{align}
The choice of $S_1$ indicates we have 
\begin{align}
    \Delta_{k_0}<\frac{19}{2250}\kappa^{-2}\eps^2 \label{ieq:Delta0}
\end{align}
for all $k_0$. Then we prove the statement by induction. 
\paragraph{Induction base:}
The choice of $S_1$ and Assumption \ref{asm:grad} means
\begin{align*}
    \Delta_{k_0} \leq \frac{19}{2250}\kappa^{-2}\eps^2 < \frac{19}{1125}\kappa^{-2}\eps^2.
\end{align*}
Combing the assumptions of $\delta_0$, we obtain the induction base.

\paragraph{Induction step:}
For any $k\geq 1$, we suppose $\Delta_{k'}\leq\frac{19}{1125}\kappa^{-2}\eps^2$ and $\delta_{k'}\leq\kappa^{-2}\eps^2$ holds for all $k'< k$. 
Let $k'_0$ be the largest integer such that ${\rm mod}(k'_0, q)=0$ and $k'_0\leq k$.
Using Lemma \ref{lem:SR}, and inequalities (\ref{ieq:alpha}) and (\ref{ieq:Delta0}), we have 
\begin{align}\label{ieq:Delta-k}
\begin{split}
     \Delta_k
\leq & \Delta_{k'_0} + \frac{3}{64S_2(1-\alpha)} \sum_{i=k_0}^{k-1}\left(\Delta_{i} + \delta_{i} + 2\ell^2\eps_\vx^2\right) + \frac{(k-k_0)\ell^2\eps_\vx^2}{S_2} \\
\leq & \Delta_{k'_0} + \frac{3q}{64S_2(1-\alpha)} \left(\frac{19}{1125}\kappa^{-2}\eps^2 + \kappa^{-2}\eps^2 + \frac{2}{25}\kappa^{-2}\eps^2\right) + \frac{q}{25S_2}\kappa^{-2}\eps^2 \\
\leq & \frac{19}{1125}\kappa^{-2}\eps^2
\end{split}
\end{align}
and
\begin{align}\label{ieq:delta-k}
\begin{split}
     \delta_{k}  
\leq & \left(\frac{128\ell}{m\mu}+\frac{3}{8}\right)\delta_{k-1} + \left(\frac{67}{8}+\frac{64\ell}{m\mu}\right)\Delta_{k-1} + \left(\frac{35}{4}+\frac{192\ell}{m\mu}\right)\ell^2\eps_\vx^2 \\
\leq & \left(\frac{1}{8}+\frac{3}{8}\right)\delta_{k-1} + \left(\frac{67}{8}+\frac{1}{16}\right)\Delta_{k-1} +  \left(\frac{35}{4}+\frac{3}{16}\right)\cdot\frac{\kappa^{-2}\eps^2}{25} 
\leq \kappa^{-2}\eps^2.
\end{split}
\end{align}
\end{proof}

\section{Initialization via Projected Inexact SARAH}\label{app:PiSARAH}
The initialization of SREDA (line \ref{line:PiSARAH} of Algorithm \ref{alg:SREDA}) can be regarded as solving a stochastic constrained convex minimization (concave maximization) problem. Hence, we consider the following formulation
\begin{align}
    \min_{\vw\in\fC} g(\vw) \triangleq \BE[G(\vw;\vxi)], \label{prob:convex}
\end{align}
where $\fC\subseteq\BR^d$ is a compact and convex set and $\vxi$ is a random variable. 

We suppose the optimal solution is $\vw^*=\argmin_{\vw\in\fC}g(\vw)$, the condition number is $\kappa=\mu/\ell$ and the following assumptions hold.
\begin{asm}\label{asm:G}
    The component function $G$ has an average $\ell$-Lipschitz gradient,  i.e.,
    there exists a constant $\ell>0$ such that for any $\vw$, $\vw'$ and random vector $\vxi$, we have
    \begin{align*}
        \BE\norm{\nabla G(\vw; \vxi) - \nabla F(\vw'; \vxi)}^2
        \leq &\ell^2\norm{\vw-\vw'}^2.
    \end{align*}
\end{asm}

\begin{asm}\label{asm:G2}
    The component function $G$ is convex. That is, for any $\vw$, $\vw'$ and random vector $\vxi$, we have
    \begin{align*}
        G(\vw; \vxi) \geq G(\vw'; \vxi) + \inner{\nabla G(\vw';\vxi)}{\vw-\vw'}.
    \end{align*}
\end{asm}

\begin{asm}\label{asm:g}
    The function $g(\vw)$ is $\mu$-strongly-convex. That is, there exists a constant $\mu>0$ such that for any  $\vw$ and $\vw'$, we have
    \begin{align*}
        g(\vw) \geq g(\vw') + \inner{\nabla g(\vw')}{\vw-\vw'} + \frac{\mu}{2}\norm{\vw-\vw'}^2.
    \end{align*}
\end{asm}

\begin{asm}\label{asm:g-grad}
    The gradient of each component function $G(\vw;\vxi)$ has bounded variance. That is, there exists a constant $\sigma>0$ such that for and $\vw$ and random vector $\vxi$, we have
    \begin{align*}
        \BE\norm{\nabla G(\vw;\vxi) - \nabla g(\vw)}^2 \leq \sigma^2 < \infty.
    \end{align*}
\end{asm}

We propose projected inexact SARAH (PiSARAH)  to solve problem (\ref{prob:convex}), whose detailed procedure is presented in Algorithm \ref{alg:PiSARAH}.

\begin{algorithm}[ht]
    \caption{${\rm PiSARAH}~(g(\cdot), K_0)$}\label{alg:PiSARAH}
	\begin{algorithmic}[1]
	\STATE \textbf{Input} $\vw_0\in\fC$, learning rate $\gamma>0$, inner loop size $m$, batch sizes $b_1$ \\[0.1cm]
    \STATE \textbf{for}  $k=0,\dots,K_0-1$ \textbf{do} \\[0.1cm]
    \STATE\quad draw $b_1$ samples $\{\vxi_1,\cdots,\vxi_b\}$ \\[0.1cm]
    \STATE\quad $\tw_{k,0} = \vw_{k}$ \\[0.1cm]
    \STATE\quad $\tv_{k,0} = \frac{1}{b_1}\sum_{i=1}^{b_1} \nabla G(\tw_{k,0}; \vxi_{i})$ \\[0.1cm]
    \STATE\quad $\tw_{k,1} = \tw_{k,0} - \gamma\tv_{k,0}$ \\[0.1cm]
    \STATE\quad \textbf{for}  $t=1,\dots,m-1$ \textbf{do} \\[0.1cm]
    \STATE\quad\quad draw sample $\vxi_t$ \\[0.1cm] 
    \STATE\quad\quad $\tv_{k,t} = \tv_{k,t-1} + \nabla G(\tw_{k,t};\vxi_t) - \nabla G(\tw_{k,t-1};\vxi_t) $ \\[0.1cm]
    \STATE\quad\quad $\tw_{k,t+1} = \Pi_\fC(\tw_{k,t} - \gamma\tv_{k,t}$)  \\[0.1cm]
    \STATE\quad \textbf{end for} \\[0.1cm]
    \STATE\quad $\vw_{k+1}=\tw_{k,s_k}$, where $s_k$ is uniformly sampled from $\{1,\dots,m\}$ \\[0.1cm]
    \STATE \textbf{end for} \\[0.1cm]
	\STATE \textbf{Output}: $\vw_{K_0}$
	\end{algorithmic}
\end{algorithm}

We are interested in the convergence behavior of the gradient mapping, that is
\begin{align*}
    \fG_{\gamma}(\vw)\triangleq \frac{\vw-\Pi_{\fC }(\vw-\lambda\nabla g(\vw))}{\lambda}.
\end{align*}

The remain of this section provide the convergence analysis of PiSARAH. 

Note that each epoch of PiSARAH can be regarded as using ConcaveMaximizer (Algorithm \ref{alg:inner}) on $-g(\cdot)$. Hence, we can follow the analysis of Lemma \ref{lem:SR} to achieve the result as follows. 
\begin{cor}\label{cor:SR}
For Algorithm \ref{alg:PiSARAH} with $\gamma=\frac{1}{8\ell}$, we have 
{\begin{align*}
   \BE\|\fG_{\gamma}(\vw_{k+1})\|_2^2  
\leq \left(\frac{64\ell}{m\mu}+\frac{1}{4}\right)\BE\|\fG_{\gamma}(\vw_{k})\|_2^2 + \left(\frac{64\ell}{m\mu}+\frac{33}{4}\right)\BE\norm{\nabla g(\tw_{k,0})-\tv_{k,0}}^2.
\end{align*}}
\end{cor}
\begin{proof}
Using Lemma \ref{lem:SR} in the view of $g_k(\cdot)=g(\cdot)$, we have
\begin{align}\label{ieq:SR-convex}
\begin{split}
   & \BE\|\fG_{\gamma}(\vw_{k+1})\|_2^2 \\  
\leq & \frac{64\ell}{m\mu}\BE\|\fG_{\gamma}(\vw_{k})\|_2^2 + \left(\frac{64\ell}{m\mu}+8\right)\BE\norm{\nabla g(\tw_{k,0})-\tv_{k,0}}^2 + 8\ell^2\BE\norm{\tw_{k,1}-\tw_{k,0}}^2.
\end{split}
\end{align}
The last term of (\ref{ieq:SR-convex}) can be bounded by
\begin{align*}
    & \norm{\frac{\tw_{k,0}-\tw_{k,1}}{\gamma}}^2 \\
\leq  & 2\norm{\frac{\tw_{k,0}-\tw_{k,1}}{\gamma} - \fG_\gamma(\vw_{k,0})}^2 + 2\norm{\fG_\gamma(\vw_{k,0})}^2 \\
=  & 2\norm{\frac{\tw_{k,0}-\Pi_\fC(\vw_{k,0}-\lambda\tv_{k,0})}{\gamma} - \frac{\vw_{k,0}-\Pi_\fC(\vw_{k,0}-\lambda\nabla g(\vw_{k,0}))}{\gamma}}^2 + 2\norm{\fG_\gamma(\vw_{k,0})}^2 \\
=  & 2\norm{\tv_{k,0}-\nabla g(\vw_{k,0})}^2 + 2\norm{\fG_\gamma(\vw_{k,0})}^2,
\end{align*}
which implies
\begin{align}\label{ieq:SR-convex2}
    8\ell^2\BE\norm{\tw_{k,1}-\tw_{k,0}}^2
\leq \frac{1}{4}\BE\left[\norm{\tv_{k,0}-\nabla g(\vw_{k,0})}^2 + \norm{\fG_\gamma(\vw_{k,0})}^2\right].
\end{align}
We finish the proof by combining (\ref{ieq:SR-convex}) and (\ref{ieq:SR-convex2}).
\end{proof}

Then we provide the main result in this section to show the convergence of the gradient mapping.
\begin{thm}\label{thm:PiSARAH}
For Algorithm \ref{alg:PiSARAH} with
\begin{align*}
K_0=\left\lceil\frac{\log\left(2\zeta^{-1}\|\fG_{\gamma}(\vw_{0})\|_2^2\right)}{\log2}\right\rceil, m=\left\lceil{256\kappa}\right\rceil, 
\lambda = \frac{1}{8\ell} \text{~~and~~} 
b_1=\left\lceil34\sigma^2\zeta^{-1}\right\rceil. 
\end{align*}
Then we have $\BE\|\fG_{\gamma}(\vw_{K_0})\|_2^2 \leq \zeta$.
\end{thm}
\begin{proof}
Using Corollary \ref{cor:SR} with $m=\left\lceil256\kappa\right\rceil$ and $b_1=\left\lceil34\sigma^2\zeta^{-1}\right\rceil$
we have
\begin{align*}
    \BE\|\fG_{\gamma}(\vw_{k+1})\|_2^2  
\leq \frac{1}{2}\BE\|\fG_{\gamma}(\vw_{k})\|_2^2 + \frac{\zeta}{4},
\end{align*}
which implies
\begin{align*}
     & \BE\|\fG_{\gamma}(\vw_{K_0})\|_2^2-\frac{\zeta}{2} \\
\leq & \frac{1}{2}\left(\BE\|\fG_{\gamma}(\vw_{K_0-1})\|_2^2 - \frac{\zeta}{2}\right) \\
\leq & \frac{1}{2^{K_0}}\left(\BE\|\fG_{\gamma}(\vw_{0})\|_2^2 - \frac{\zeta}{2}\right) \\
\leq & \frac{1}{2^{K_0}}\BE\|\fG_{\gamma}(\vw_{0})\|_2^2
\leq \frac{\zeta}{2}.
\end{align*}
Hence, we have $\BE\|\fG_{\gamma}(\vw_{K_0})\|_2^2 \leq \zeta$. 
\end{proof}

The result of Corollary \ref{cor:SR} indicate that we hope the PiSARAH as initialization to make the gradient mapping is no larger than $\fO(\kappa^{-2}\eps^2)$. The following statement shows we can implement it within $\fO(\kappa^2\eps^{-2}\log(\kappa/\eps))$ stochastic gradient evaluations. 

\begin{cor}
    Under assumptions of Theorem \ref{thm:PiSARAH}, we can obtain $\BE\|\fG_{\gamma}(\vw_{K_0})\|_2^2 \leq \kappa^{-2}\eps^{2}$ with $\fO(\kappa^2\eps^{-2}\log(\kappa/\eps))$ stochastic gradient evaluations.
\end{cor}
\begin{proof}
Using Theorem \ref{thm:PiSARAH} with $\zeta=\kappa^{-2}\eps^{2}$, we have $\BE\|\fG_{\gamma}(\vw_{K_0})\|_2^2 \leq \kappa^{-2}\eps^{2}$.
The total number of stochastic gradient evaluation is
\begin{align*}
  & K_0 \cdot (b_1 + m) \\ 
= & \left\lceil\frac{\log\left(2\kappa^2\eps^{-2}\|\fG_{\gamma}(\vw_{0})\|_2^2\right)}{\log2}\right\rceil \cdot \left(\left\lceil34\sigma^2\kappa^2\eps^{-2}\right\rceil + \left\lceil{256\kappa}\right\rceil \right) \\
= & \fO(\kappa^2\eps^{-2}\log(\kappa/\eps))
\end{align*}
\end{proof}

\section{The Proof of Theorem \ref{thm:main}}\label{app:main}

Our proof mainly depends on $f(\vx_k, \vy_k)$ and its gradient mapping with respect to $\vy$, which is
different from \citeauthor{lin2019gradient}'s~\cite{lin2019gradient} analysis that directly considered the value of
$\Phi(\vx_k)$ and the distance $\norm{\vy_k-\vy^*(\vx_k)}$. We split
the change of objective functions after one iteration on $(\vx_k, \vy_k)$ into $A_k$ and $B_k$ as follows
\begin{align*}
    f(\vx_{k+1}, \vy_{k+1}) - f(\vx_{k}, \vy_{k}) 
 =  \underbrace{f(\vx_{k+1}, \vy_{k}) - f(\vx_{k}, \vy_{k})}_{A_k} + \underbrace{f(\vx_{k+1}, \vy_{k+1})- f(\vx_{k+1}, \vy_{k})}_{B_k},
\end{align*}
where $A_k$ provides the decrease of function value $f$ and $B_k$ can characterize the difference between $f(\vx_{k+1},\vy_{k+1})$ and $\Phi(\vx_{k+1})$. 
We want to show $\BE[A_k] \leq - \fO\left(\kappa^{-1}\eps\right)$
and
$\BE[B_k] \leq \fO\left((\kappa\ell)^{-1}\eps^2\right)$.
Connecting the upper bound of $A_k$ and $B_k$, we can bound the average of $\BE\norm{\vv_k}^2$ and use it to prove the upper bound of $\BE[\nabla\Phi(\hat\vx)]$ we desired.

We provide two lemmas for preparing the proof of our main results, Theorem \ref{thm:main}. The first lemma is to upper bound $B_k$.
\begin{lem}\label{lem:bound-B}
    Under assumptions of Theorem 1, we have
    $\BE[B_k] \leq  \dfrac{134\eps^2}{\kappa\ell}$ for any $k\geq 1$.
\end{lem}
\begin{proof}
Note that our algorithm means $\vy_k=\ty_{k-1,s_{k-1}}$, where $s_{k-1}$ is sampled from $\{1,\dots,m\}$.
Using Lemma 8 by letting $\vy^+=\vy_k=\ty_{k-1,s_{k-1}}$, $\vy=\ty_{k-1,s_{k-1}-1}$ and $\vu=\tu_{k-1,s_{k-1}-1}=-\hu_{k-1,s_{k-1}-1}$, then we have
\begin{align}
  & \BE[f(\vx_{k+1},\vy_{k+1})-f(\vx_{k+1},\vy_{k})]\nonumber\\
= & \BE[f(\vx_{k+1},\vy_{k+1})-f(\vx_{k+1},\ty_{k-1,s_{k-1}})] \nonumber\\
\leq & \frac{1}{\mu}\BE\left[\norm{\fG_{\lambda,\vy}(\vx_{k+1},\ty_{k-1,s_{k-1}-1})}^2+\norm{\nabla_\vy f(\vx_{k+1},\ty_{k-1,s_{k-1}-1})-\tu_{k-1,s_{k-1}-1}}^2\right]\label{ieq:B}.
\end{align}
We first bound the first term of (\ref{ieq:B}):
\begin{align*}
  & \BE\norm{\fG_{\lambda,\vy}(\vx_{k+1},\ty_{k-1,s_{k-1}-1})}^2 \\
\leq & 2\BE\norm{\fG_{\lambda,\vy}(\vx_{k+1},\vy_{k})}^2 + 2\BE\norm{\fG_{\lambda,\vy}(\vx_{k+1},\ty_{k-1,s_{k-1}-1})-\fG_{\lambda,\vy}(\vx_{k+1},\ty_{k-1,s_{k-1}})}^2 \\
= & 2\BE\left[\norm{\fG_\lambda(\vx_{k+1},\vy_{k})-\fG_\lambda(\vx_{k},\vy_{k})}+\norm{\fG_\lambda(\vx_{k},\vy_{k})}^2\right]  \\
  & + 2\BE\norm{\fG_\lambda(\vx_{k+1},\ty_{k-1,s_{k-1}-1})-\fG_\lambda(\vx_{k+1},\ty_{k-1,s_{k-1}})}^2\\
\leq & 2(\ell^2\eps_\vx^2 + \delta_k) +  2\BE\norm{\fG_\lambda(\vx_{k+1},\ty_{k-1,s_{k-1}-1})-\fG_\lambda(\vx_{k+1},\ty_{k-1,s_{k-1}})}^2,
\end{align*}
where the inequalities are based on Young's inequality and Lemma \ref{lem:diff-grad-mapping}. 

Using similar ideas, we can also prove
\begin{align*}
&  \BE\norm{\fG_{\lambda,\vy}(\vx_{k+1},\ty_{k-1,s_{k-1}-1})-\fG_{\lambda,\vy}(\vx_{k+1},\ty_{k-1,s_{k-1}})}^2 \\
\leq &  3\BE\norm{\fG_{\lambda,\vy}(\vx_{k+1},\ty_{k-1,s_{k-1}-1})-\fG_{\lambda,\vy}(\vx_{k},\ty_{k-1,s_{k-1}-1})}^2 \\
& +  3\BE\norm{\fG_{\lambda,\vy}(\vx_{k+1},\ty_{k-1,s_{k-1}})-\fG_{\lambda,\vy}(\vx_{k},\ty_{k-1,s_{k-1}})}^2  \\
& + 3\BE\norm{\fG_{\lambda,\vy}(\vx_{k},\ty_{k-1,s_{k-1}-1})-\fG_{\lambda,\vy}(\vx_{k},\ty_{k-1,s_{k-1}})}^2 \\
\leq & 6(\ell^2\eps_\vx^2+\delta_k) + 3\BE\norm{\fG_{\lambda,\vy}(\vx_{k},\ty_{k-1,s_{k-1}-1})-\fG_{\lambda,\vy}(\vx_{k},\ty_{k-1,s_{k-1}})}^2,
\end{align*}
and
{\begin{align*}
& \BE\norm{\fG_{\lambda,\vy}(\vx_{k},\ty_{k-1,s_{k-1}-1})-\fG_{\lambda,\vy}(\vx_{k},\ty_{k-1,s_{k-1}})}^2 \\
= & \BE\Bigg\|\frac{\ty_{k-1,s_{k-1}-1}-\Pi_{\fY}(\ty_{k-1,s_{k-1}-1}-\lambda\nabla g_k(\ty_{k-1,s_{k-1}-1}))}{\lambda} \\
& \quad\quad\quad - \frac{\ty_{k-1,s_{k-1}}-\Pi_{\fY}(\ty_{k-1,s_{k-1}}-\lambda\nabla g_k(\ty_{k-1,s_{k-1}}))}{\lambda}\Bigg\|_2^2 \\
\leq &  2\BE\Bigg\|\frac{\Pi_{\fY}(\ty_{k-1,s_{k-1}-1}-\lambda\nabla g_k(\ty_{k-1,s_{k-1}-1}))}{\lambda} 
-\frac{\Pi_{\fY}(\ty_{k-1,s_{k-1}}-\lambda\nabla g_k(\ty_{k-1,s_{k-1}}))}{\lambda}\Bigg\|_2^2 \\
& + 2\BE\norm{\frac{\ty_{k-1,s_{k-1}-1}-\ty_{k-1,s_{k-1}}}{\lambda}}^2 \\
\leq &  2\BE\Bigg\|\frac{(\ty_{k-1,s_{k-1}-1}-\lambda\nabla g_k(\ty_{k-1,s_{k-1}-1}))}{\lambda} 
-\frac{(\ty_{k-1,s_{k-1}}-\lambda\nabla g_k(\ty_{k-1,s_{k-1}}))}{\lambda}\Bigg\|_2^2 \\
& + 2\BE\norm{\frac{\ty_{k-1,s_{k-1}-1}-\ty_{k-1,s_{k-1}}}{\lambda}}^2 \\
\leq & 6\BE\norm{\frac{\ty_{k-1,s_{k-1}-1}-\ty_{k-1,s_{k-1}}}{\lambda}}^2 + 
2\BE\norm{\nabla g_k(\ty_{k-1,s_{k-1}-1}))-\nabla g_k(\ty_{k-1,s_{k-1}}))}^2\\
\leq & (6+2\ell^2\lambda^2)\BE\norm{\frac{\ty_{k-1,s_{k-1}-1}-\ty_{k-1,s_{k-1}}}{\lambda}}^2 \\
\leq & (6+2\ell^2\lambda^2)\BE\norm{\frac{\ty_{k-1,1}-\ty_{k-1,0}}{\lambda}}^2. 
\end{align*}}
Combining all above results, we have
\begin{align}\label{ieq:B1}
\begin{split}
  & \BE\norm{\fG_{\lambda,\vy}(\vx_{k+1},\ty_{k-1,s_{k-1}-1})}^2 \\
\leq & 14(\ell^2\eps_\vx^2+\delta_k) + 6 (6+2\ell^2\lambda^2)\BE\norm{\frac{\ty_{k-1,1}-\ty_{k-1,0}}{\lambda}}^2 \\
\leq & 14(\ell^2\eps_\vx^2+\delta_k) + 18 (6+2\ell^2\lambda^2)(\Delta_{k}+2\ell^2\eps_\vx^2+\delta_k) \\
\leq & 14\left(\frac{1}{25}\kappa^{-2}\eps^2+\kappa^{-2}\eps^2\right) + \frac{1737}{16}\left(\frac{19}{1125}\kappa^{-2}\eps^2+\frac{2}{25}\kappa^{-2}\eps^2+\kappa^{-2}\eps^2\right) \\
\leq &  \left(\frac{364}{25} + \frac{1737}{16}\cdot\frac{1234}{1125}\right)\kappa^{-2}\eps^2 \\
= &  \frac{133641}{1000}\kappa^{-2}\eps^2
\end{split}
\end{align}
where the second inequality is based on Lemma \ref{lem:diff-y0-y1} and the third inequality is due to Corollary \ref{cor:SR}. 

We bound the second term of (\ref{ieq:B}) as follows:
\begin{align}\label{ieq:B2}
\begin{split}
& \BE\norm{\nabla_\vy f(\vx_k,\ty_{k-1,s_{k-1}-1})-\tu_{k-1,s_{k-1}-1}}^2 \\ \leq & \BE\norm{\nabla_\vy f(\vx_k,\ty_{k-1,s_{k-1}})-\tu_{k-1,s_{k-1}}}^2 \\
= & \BE\norm{\nabla_\vy f(\vx_k,\vy_{k})-\vu_{k}}^2 \\
\leq & \Delta_k \leq \frac{19}{1125}\kappa^{-2}\eps^2,
\end{split}
\end{align}
where the first inequality is based on Lemma \ref{lem:martingale} and the last one is due to Corollary \ref{cor:SR}.

By connecting inequalities (\ref{ieq:B}), (\ref{ieq:B1}) and (\ref{ieq:B2}), we have
\begin{align*}
    \BE[B_k] \leq \frac{1}{\mu}\cdot\frac{1202921}{9000}\kappa^{-2}\eps^2 \leq \frac{134\eps^2}{\kappa\ell}.
\end{align*}
\end{proof}

Then we show the estimate error of approximating $\nabla\Phi(\vx_k)$ by $\vv_k$.
\begin{lem}\label{diff:phi-v}
Under assumptions of Theorem \ref{thm:main}, we have
\begin{align*}
    \BE\norm{\nabla \Phi(\vx_k)}
    \leq \BE\norm{\vv_k} + \frac{15}{7}\eps.
    \end{align*}
\end{lem}
\begin{proof}
Consider that we have defined  $\vy^*(\vx)=\argmax_{\vy\in\fY} f(\vx,\vy)$, then we have
\begin{align*}
  & \BE\norm{\nabla\Phi(\vx_k)-\nabla_\vx f(\vx_k,\vy_k)}^2 \\
= & \BE\norm{\nabla_\vx f(\vx_k,\vy^*(\vx_k))-\nabla_\vx f(\vx_k,\vy_k)}^2 \\
\leq& \ell^2\BE\norm{\vy^*(\vx_k)-\vy_k}^2 \\
\leq & \frac{4\ell^2}{\mu^2}\BE\norm{\fG_{\lambda,\vy}(\vx_k,\vy_k)}^2 \\
=  & \frac{4\ell^2}{\mu^2}\delta_k 
\leq 4\eps^2,
\end{align*}
where the first equality is based on Lemma \ref{lem:Phi-smooth}, the second inequality comes from Corollary \ref{cor:prox2} and the last inequality is due to Lemma \ref{lem:SR}.

By using Jensen's inequality, we have
\begin{align*}
    \Big(\BE\norm{\nabla \Phi(\vx_k) - \nabla_\vx f(\vx_k,\vy_k)}\Big)^2
    \leq \BE\norm{\nabla \Phi(\vx_k) - \nabla_\vx f(\vx_k,\vy_k)}^2
    \leq 4\eps^2,
\end{align*}
which means
\begin{align}
\begin{split}\label{ieq:v-gphi1}
    \BE\norm{\nabla \Phi(\vx_k)}
= & \BE\norm{\nabla_\vx f(\vx_k,\vy_k)-(\nabla_\vx f(\vx_k,\vy_k)-\nabla \Phi(\vx_k))} \\
\leq & \BE\norm{\nabla_\vx f(\vx_k,\vy_k)} + \BE\norm{\nabla_\vx f(\vx_k,\vy_k)-\nabla \Phi(\vx_k)} \\
\leq & \BE\norm{\nabla_\vx f(\vx_k,\vy_k)} + 2\eps.
\end{split}
\end{align}
Similarly, we can use Jensen's inequality and Lemma \ref{lem:SR} to prove
\begin{align*}
    \Big(\BE\norm{\vv_k - \nabla_\vx f(\vx_k,\vy_k)}\Big)^2
\leq \BE\norm{\vv_k - \nabla_\vx f(\vx_k,\vy_k)}^2
\leq \Delta_k
\leq \frac{19}{1125}\kappa^{-2}\eps^2,
\end{align*}
and
\begin{align}
\begin{split}\label{ieq:v-gphi2}
    \BE\norm{\nabla_\vx f(\vx_k,\vy_k)}
= & \BE\norm{\vv_k-(\vv_k-\nabla_\vx f(\vx_k,\vy_k))} \\
\leq & \BE\norm{\vv_k} + \BE\norm{\vv_k-\nabla_\vx f(\vx_k,\vy_k)} \\
\leq & \BE\norm{\vv_k} +  \sqrt{\frac{19}{1125}}\kappa^{-1}\eps \\
\leq & \BE\norm{\vv_k} +  \frac{1}{7}\eps.
\end{split}
\end{align}
By combining the inequalities (\ref{ieq:v-gphi1}) and (\ref{ieq:v-gphi2}), we obtain
\begin{align*}
\BE\norm{\nabla \Phi(\vx_k)}
\leq & \BE\norm{\nabla_\vx f(\vx_k,\vy_k)} + 2\eps \\
\leq & \BE\norm{\vv_k} +  \frac{1}{7}\eps + 2\eps \\
\leq & \BE\norm{\vv_k} +  \frac{15}{7}\eps.
\end{align*}
\end{proof}

Now we can present the proof of Theorem \ref{thm:main}.
\begin{proof}
Based on the update of $\vx_k$ in Algorithm \ref{alg:SREDA}, we have
\begin{align}\label{ieq:f-smooth}
\begin{split}
          A_k 
    \leq  & -\eta_k\inner{\nabla_\vx f(\vx_k,\vy_k)}{\vv_k} + \frac{\ell\eta_k^2}{2}\norm{\vv_k}^2 \\
    \leq  & \frac{\eta_k}{2}\norm{\nabla_\vx f(\vx_k,\vy_k)-\vv_k}^2
        - \left(\frac{\eta_k}{2} - \frac{\ell\eta_k^2}{2}\right)\norm{\vv_k}^2,
\end{split}
\end{align}
where the first inequality is due to the average smoothness of $f$, and second comes from the Cauchy-Schwartz inequality.

The choice of step size $\eta_k$ implies that
{\begin{align}
\begin{split}
    \left(\frac{\eta_k}{2} - \frac{\ell\eta_k^2}{2}\right)\norm{\vv_k}^2 
\geq & \frac{9\eps^2}{100\kappa\ell} \min\left(\dfrac{\norm{\vv_k}}{\eps}, \dfrac{\norm{\vv_k}^2 }{2\eps^2}\right) \\
\geq & \frac{9\eps^2}{100\kappa\ell}  \left(\dfrac{\norm{\vv_k}}{\eps}-2\right)  \\
= & \frac{9}{100\kappa\ell} \left(\eps\norm{\vv_k}-2\eps^2\right),
\end{split}\label{ieq:eta}
\end{align}}
where the first inequality is based on $\kappa\geq 1$ and the
definition of $\eta_k$; the second one uses the fact that 
$\min(|x|, \frac{x^2}{2}) \geq |x|-2$ holds
for all $x$. 

By combining inequalities (\ref{ieq:f-smooth}), (\ref{ieq:eta}) and taking expectation, we obtain the upper bound of $\BE[A_k]$:
\begin{align}
\begin{split}
           \BE[A_k] 
    \leq  & \frac{1}{20\kappa\ell}\BE\norm{\nabla_\vx f(\vx_k,\vy_k)-\vv_k}^2 -  \frac{9}{100\kappa\ell} \left(\eps\BE\norm{\vv_k}-2\eps^2\right) \\
    \leq  & \frac{1}{20\kappa\ell}\Delta_k -  \frac{9}{100\kappa\ell} \left(\eps\BE\norm{\vv_k}-2\eps^2\right).
\end{split}\label{ieq:Ak0}
\end{align}

The definition of $\Phi^*$ and Assumption \ref{asm:bound} implies
\begin{align}
\begin{split}
      \Phi^* - f(\vx_K,\vy_K) 
\leq  f(\vx_K,\vy^*(\vx_K)) - f(\vx_K,\vy_K) 
\leq  \frac{134\eps^2}{\kappa\ell}.
\end{split}\label{ieq:Delta_f}
\end{align}
where the second inequality can be shown by following the proof of 
Lemma \ref{lem:bound-B}\footnote{Note that the proof of Lemma \ref{lem:bound-B} is based on inequality (\ref{ieq:B}) whose left-hand side can be replaced by $\BE[f(\vx_{k+1},\vy^*)-f(\vx_{k+1},\vy)]$ for any $\vy^*\in\fY$ because of Lemma \ref{lem:PL}. Hence, we can directly obtain the second inequality of (\ref{ieq:Delta_f}) by letting $k=K-1$ and $\vy^*=\vy^*(\vx_K)$.}.

By combining inequalities (\ref{ieq:Ak0}), (\ref{ieq:Delta_f}), Lemma \ref{lem:bound-B} and Corollary \ref{cor:SR}; and taking the average over $k=0,\dots,K-1$, we obtain
{\small\begin{align*}
      \frac{1}{K}\sum_{k=0}^{K-1} \BE[f(\vx_{k+1},\vy_{k+1})-f(\vx_k,\vy_k)]
\leq  \frac{1}{K}\sum_{k=0}^{K-1} \left(\frac{1}{20\kappa\ell}\Delta_k - \frac{9}{100\kappa\ell} \left(\eps\BE\norm{\vv_k}-2\eps^2\right) + \frac{134\eps^2}{\kappa\ell}\right).
\end{align*}}
Consequently, we have
{\begin{align*}
    & \frac{9\eps}{100\kappa\ell}\cdot\frac{1}{K}\sum_{k=0}^{K-1} \BE\norm{\vv_k} \\
\leq & \frac{1}{K}\sum_{k=0}^{K-1} \left(\frac{1}{20\kappa\ell}\Delta_k + \frac{9\eps^2}{50\kappa\ell}  + \frac{134\eps^2}{\kappa\ell}\right) +  \frac{1}{K}\Big(f(\vx_0,\vy_0)-\BE[f(\vx_{K},\vy_{K})]\Big) \\
\leq & \frac{135\eps^2}{\kappa\ell} +  \frac{1}{K}\Big(f(\vx_0,\vy_0)+\frac{134\eps^2}{\kappa\ell}-\Phi^*\Big),
\end{align*}}
where the second inequality uses Corollary \ref{cor:SR} to bound $\Delta_k$.

Rearranging above result, we achieve
{\begin{align}
    \frac{1}{K}\sum_{k=0}^{K-1} \BE\norm{\vv_k} 
\leq 1500\eps + \frac{100\kappa\ell}{9K\eps}\Big(\BE[f(\vx_0,\vy_0)]+\frac{134\eps^2}{\kappa\ell}-\Phi^*\Big) 
= 1500\eps + \frac{100\kappa\ell\Delta_f}{9K\eps}. \label{ieq:avg-vk}
\end{align}}
According to $K=\left\lceil100\kappa\ell\eps^{-2}\Delta_f/9\right\rceil$ and inequality (\ref{ieq:avg-vk}),
we have
\begin{align*}
  \BE\norm{\nabla\Phi(\hat\vx)}
= \frac{1}{K}\sum_{k=0}^{K-1}\BE\norm{\nabla\Phi(\vx_k)}
\leq \frac{1}{K}\sum_{k=0}^{K-1}\left(\BE\norm{\vv_k}+\frac{15}{7}\eps\right)
= 1504\eps.
\end{align*}
\end{proof}

\section{The proof of Theorem \ref{thm:finite}}\label{app:finite}

In the finite-sum case, we use the full gradient to replace the large batch sample size in stochastic case.
Similar to previous section, we extend SARAH~\cite{nguyen2017sarah} to constrained case as the initialization of $\vy_0$. We can prove Theorem \ref{thm:finite} with minor modifications on the analysis of Theorem \ref{thm:main}.

\begin{algorithm}[ht]
    \caption{${\rm PSARAH}~(g(\cdot),~K_0)$}\label{alg:PSARAH}
	\begin{algorithmic}[1]
	\STATE \textbf{Input} $\vw_0\in\fC$, learning rate $\gamma>0$, inner loop size $m$ \\[0.1cm]
    \STATE \textbf{for}  $k=1,\dots,K_0$ \textbf{do} \\[0.1cm]
    \STATE\quad $\tw_{k,0} = \vw_{s-1}$ \\[0.1cm]
    \STATE\quad $\tv_{k,0} =\nabla g(\tw_{k,0})$ \\[0.1cm]
    \STATE\quad $\tw_{k,1} = \tw_{k,0} - \gamma\tv_{k,0}$ \\[0.1cm]
    \STATE\quad \textbf{for}  $t=1,\dots,m-1$ \textbf{do} \\[0.1cm]
    \STATE\quad\quad draw sample $\vxi_t$ \\[0.1cm] 
    \STATE\quad\quad $\tv_{k,t} = \tv_{k,t-1} + \nabla G(\tw_{k,t};\vxi_t) - \nabla G(\tw_{k,t-1};\vxi_t) $ \\[0.1cm]
    \STATE\quad\quad $\tw_{k,t+1} = \Pi_\fC(\tw_{k,t} - \gamma\tv_{k,t}$) \label{line:PiSARAH-proj} \\[0.1cm]
    \STATE\quad \textbf{end for} \\[0.1cm]
    \STATE\quad $\vw_{k+1}=\tw_{k,s_k}$, where $s_k$ is uniformly sampled from $\{1,\dots,m\}$ \\[0.1cm]
    \STATE \textbf{end for} \\[0.1cm]
	\STATE \textbf{Output}: $\vw_{K_0}$
	\end{algorithmic}
\end{algorithm}

\subsection{Initialization by Projected SARAH}\label{app:SARAH}
We present the detailed procedure of projected SARAH (PSARAH) in Algorithm \ref{alg:PSARAH}, which is used to initialize $\vy_0$ in SREDA for problem (\ref{prob:main-finite}) (line \ref{line:SARAH} of Algorithm \ref{alg:SREDA-finite}). The algorithm considers the following convex optimization problem
\begin{align}
    \min_{\vw\in\fC} g(\vw) \triangleq \frac{1}{n}\sum_{i=1}^n G_i(\vw; \vxi_i), \label{prob:convex-finite}
\end{align}
where $H$ is average $\ell$-Lipschitz gradient 
and convex, $h$ is $\mu$-strongly convex, and $\vxi_i$ is a random vector.
We have the following convergence result by using SARAH to solve problem (\ref{prob:convex-finite}).

\begin{thm}\label{thm:PSARAH}
For Algorithm \ref{alg:PiSARAH} with
\begin{align*}
K_0=\left\lceil\frac{\log\left(\|\fG_{\gamma}(\vw_{0})\|_2^2\right)}{\log2}\right\rceil, m=\left\lceil{256\kappa}\right\rceil \text{~~and~~} 
\lambda = \frac{1}{8\ell}. 
\end{align*}
\end{thm}
\begin{proof}
By following the proof of Corollary \ref{cor:SR} with $\tv_{k,0}=\nabla g(\tw_{k,0})$, we have
\begin{align*}
    \BE\|\fG_{\gamma}(\vw_{k+1})\|^2  
\leq & \frac{64\ell}{m\mu}\BE\|\fG_{\gamma}(\vw_{k})\|_2^2 + 8\ell^2\BE\norm{\tw_{k,1}-\tw_{k,0}}^2 \\
\leq & \left(\frac{64\ell}{m\mu}+\frac{1}{4}\right)\BE\|\fG_{\gamma}(\vw_{k})\|_2^2 \\
\leq & \frac{1}{2}\BE\|\fG_{\gamma}(\vw_{k})\|_2^2.
\end{align*}
Hence, we have $\BE\|\fG_{\gamma}(\vw_{K_0})\|_2^2 \leq \zeta$. 
\end{proof}

Similar to stochastic case, we directly obtain the following result.
\begin{cor}
    Under assumptions of Theorem \ref{thm:PSARAH}, we can obtain $\BE\|\fG_{\gamma}(\vw_{K_0})\|_2^2 \leq \kappa^{-2}\eps^{2}$ with $\fO\left((n+\kappa)\log(\kappa/\eps)\right)$ stochastic gradient evaluations.
\end{cor}
\begin{proof}
Using Theorem \ref{thm:PSARAH} with $\zeta=\kappa^{-2}\eps^{2}$, we have $\BE\|\fG_{\gamma}(\vw_{K_0})\|_2^2 \leq \kappa^{-2}\eps^{2}$.
The total number of stochastic gradient evaluation is
\begin{align*}
  & K_0 \cdot (n + m) \\ 
= & \left\lceil\frac{\log\left(\kappa^2\eps^{-2}\|\fG_{\gamma}(\vw_{0})\|_2^2\right)}{\log2}\right\rceil \cdot \left(n + \left\lceil{256\kappa}\right\rceil \right) \\
= & \fO\left((n+\kappa)\log(\kappa/\eps)\right).
\end{align*}
\end{proof}

\subsection{The case of $n\geq\kappa^2$}
We set the parameters
\begin{align*}
& \zeta = \kappa^{-2}\eps^2,~
\eta_k=\min\left(\dfrac{\eps}{5\kappa\ell\norm{\vv_k}}, \dfrac{1}{10\kappa\ell}\right),~
\lambda=\dfrac{1}{8\ell},~
q = \lceil\kappa^{-1}n^{1/2}\rceil, \\[0.1cm]
& S_2=\left\lceil\frac{3687}{76}\kappa q\right\rceil,~
K = \left\lceil\dfrac{100\kappa\ell \eps^{-2}\Delta_f}{9}\right\rceil,
~\text{and}~  m=\lceil1024\kappa\rceil.
\end{align*}
Then the quantity $\Delta_{k_0}$ is zero for any $k_0$ with ${\rm mod}~(k_0, q)=0$. We can follow all analysis of Theorem \ref{thm:main}. Note that the different values of $q$ and $\Delta_{k_0}$ do not affect the proof of Lemma \ref{lem:recursion}.
Therefore we still obtain $\BE\norm{\nabla \Phi(\hat\vx)}\leq 1504\eps$ by the parameters setting above. The total complexity of stochastic gradient evaluation is
\begin{align*}
  & \fO((n+\kappa)\log(\kappa/\eps))~+~\fO\left(\frac{K}{q}\cdot n\right)
    ~+~ \fO\left(K\cdot S_2\cdot m\right) \\
=& \fO((n+\kappa)\log(\kappa/\eps))~+~\fO\left(\frac{\kappa\eps^{-2}}{\kappa^{-1}n^{1/2}}\cdot n\right)  ~+~ \fO\left(\kappa\eps^{-2} \cdot n^{1/2}\cdot \kappa\right) \\
= & \fO\left(n\log(\kappa/\eps) + \kappa^2n^{1/2}\eps^{-2}\right).
\end{align*}

\subsection{The case of $n\leq\kappa^2$}
We set the parameters
\begin{align*}
    & \zeta = \kappa^{-2}\eps^2,~
\eta_k=\min\left(\dfrac{\eps}{5\kappa\ell\norm{\vv_k}}, \dfrac{1}{10\kappa\ell}\right),~
\lambda=\dfrac{2}{8\ell},~ q = 1, \\[0.1cm]
    & S_2 = 1, ~
    K = \left\lceil\dfrac{100\kappa\ell \eps^{-2}\Delta_f}{9}\right\rceil,~\text{and}~
    m = \left\lceil 1024\kappa  \right\rceil.
\end{align*}
The procedure of the algorithm means $\Delta_k=0$ holds for all $k$ since $q=1$.
Everything is identical to Theorem \ref{thm:main} until Lemma \ref{lem:recursion}. 
Then we revisit the derivation of Corollary \ref{cor:SR}.
Since we have $\Delta_k=0$, the inequalities (\ref{ieq:Delta-k}) and (\ref{ieq:delta-k}) will be tighter. Hence the all original bounds still hold. The remains could still follow the proof of Theorem \ref{thm:main} and we finally obtain $\BE\norm{\nabla \Phi(\hat\vx)}\leq 1504\eps$.

The total complexity of stochastic gradient evaluation is
\begin{align*}
  & \fO((n+\kappa)\log(\kappa/\eps))~+~\fO\left(\frac{K}{q}\cdot n\right)~+~ \fO\left(K\cdot S_2\cdot m\right) \\
=& \fO((n+\kappa)\log(\kappa/\eps))~+~\fO\left(\frac{\kappa\eps^{-2}}{1}\cdot n\right)
     ~+~ \fO\left(\kappa\eps^{-2} \cdot 1 \cdot \kappa\right) \\
=& \fO\left((\kappa^2 + \kappa n)\eps^{-2} \right).
\end{align*}


\end{document}